\newtheorem{heuristic}{Heuristic}
\newtheorem{theoremdef}{Theorem}
\newcommand{\ER}{Erd\H os-R\'enyi}
\title{Heuristics for Link Prediction in Multiplex Networks}
\author{Robert E. Tillman \and Vamsi K. Potluru \and Jiahao Chen \and Prashant Reddy \and Manuela Veloso\institute{JPMorgan AI Research, email: robert.e.tillman@jpmorgan.com} }
\begin{document}

\maketitle

\begin{abstract}
\emph{Link prediction}, or the inference of future or missing connections between entities, is a well-studied problem in network analysis. A multitude of heuristics exist for link prediction in ordinary networks with a single type of connection. However, link prediction in \emph{multiplex networks}, or networks with multiple types of connections, is not a well understood problem. We propose a novel general framework and three families of heuristics for multiplex network link prediction that are simple, interpretable, and take advantage of the rich connection type correlation structure that exists in many real world networks. We further derive a theoretical threshold for determining when to use a different connection type based on the number of links that overlap with an \ER{} random graph. Through experiments with simulated and real world scientific collaboration, transportation and global trade networks, we demonstrate that the proposed heuristics show increased performance with the richness of connection type correlation structure and significantly outperform their baseline heuristics for ordinary networks with a single connection type.
\end{abstract}

\section{Introduction}

Networks are powerful representations of interactions in complex systems with a wide range of applications in biology, physics, sociology, engineering and computer science. Modeling interactions between entities as links between nodes in a graph allows us to leverage formal methods to understand influence, community structure and other patterns, make predictions about future interactions and detect unusual activity. The study of networks and their applications has thus become a major focus of many scientific disciplines in recent decades.

Since the advent of large-scale online social networks, the \emph{link prediction problem} \cite{libennowell-04} has received increased attention. Link prediction is usually defined in terms of the following two interrelated problems:
\begin{itemize}
    \item Given a current snapshot of a network at the present time, what new connections are likely to develop in the future?
    \item Given an incomplete network, what connections are likely to be actually present but missing from the graph?
\end{itemize}
\noindent Link prediction has numerous applications including social network recommendation systems for new friends or individuals to follow \cite{parvathy:17},
predicting protein and metabolic interactions in biological networks \cite{wang:13},
finding experts and predicting collaborations in scientific co-authorship networks \cite{libennowell-04},
identifying hidden interactions of criminal organizations \cite{berlusconi:16} and predicting future routes in transit systems \cite{lu:11}.

Most of the existing link prediction literature focuses on ordinary networks which represent a single type of interaction between entities. In many complex systems, however, we observe multiple types of interactions. For example, individuals may interact using multiple social networks and cities and transit stations may be linked via different carriers, lines or modes of transit. In order to apply standard techniques, these multiple interactions must either be conflated to a single type, which is not appropriate if they are sufficiently dissimilar, or the analysis must be restricted to only one type of interaction. This is limiting since conflation restricts our ability to predict the type of future or missing interactions while using only a single interaction type fails to leverage additional useful information gleaned from other types of interactions in the network.

\emph{Multiplex networks} are graphical structures that can represent multiple types of interactions between entities \cite{kivela:14}. In multiplex networks, connections between entities occur at a \emph{layer} of the network, which represents a specific interaction type. These networks can be visualized as either a single graph with multiple edge types or a set of ordinary (\emph{single-layer} or \emph{monoplex}) graphs with the same nodes but different edges, each corresponding to a different layer. Figure 1 depicts a multiplex network representing 3 types of interactions among 9 entities. In this example, $X$ and $V$ are connected in layer 1, which might correspond to a specific social network, but are not connected in the other layers, which might correspond to other social networks.\vspace{.4cm}
\begin{figure}
\begin{tikzpicture}[multilayer=3d]
\SetLayerDistance{-1}
\begin{Layer}[layer=1]
\Plane[x=-1.45,y=-1.4,width=2.9,height=2.9,NoBorder,opacity=.1,color=blue];
\node at (-.5,-1.45)[below right]{Layer 1};
\end{Layer}
\begin{Layer}[layer=2]
\Plane[x=-1.45,y=0.3,width=2.9,height=2.9,NoBorder,opacity=.1,color=green]
\node at (1.25,-1.45)[below right]{Layer 2};
\end{Layer}
\begin{Layer}[layer=3]
\Plane[x=-1.45,y=2.05,width=2.9,height=2.9,NoBorder,opacity=.1,color=orange]
\node at (3,-1.45)[below right]{Layer 3};
\end{Layer}
\Vertex[x=-.9,y=.9,layer=1,size=.5,style={shading=ball,opacity=.2},opacity=.2,fontscale=1.25,label=X,color=blue]{X1}
\Vertex[x=0,y=.9,layer=1,size=.5,style={shading=ball,opacity=.2},opacity=.2,fontscale=1.25,label=Y,color=blue]{Y1}
\Vertex[x=.9,y=.9,layer=1,size=.5,style={shading=ball,opacity=.2},opacity=.2,fontscale=1.25,label=Z,color=blue]{Z1}
\Vertex[x=-.9,y=0,layer=1,size=.5,style={shading=ball,opacity=.2},opacity=.2,fontscale=1.25,label=U,color=blue]{U1}
\Vertex[x=0,y=0,layer=1,size=.5,style={shading=ball,opacity=.2},opacity=.2,fontscale=1.25,label=V,color=blue]{V1}
\Vertex[x=.9,y=0,layer=1,size=.5,style={shading=ball,opacity=.2},opacity=.2,fontscale=1.25,label=W,color=blue]{W1}
\Vertex[x=-.9,y=-.9,layer=1,size=.5,style={shading=ball,opacity=.2},opacity=.2,fontscale=1.25,label=P,color=blue]{P1}
\Vertex[x=0,y=-.9,layer=1,size=.5,style={shading=ball,opacity=.2},opacity=.2,fontscale=1.25,label=Q,color=blue]{Q1}
\Vertex[x=.9,y=-.9,layer=1,size=.5,style={shading=ball,opacity=.2},opacity=.2,fontscale=1.25,label=R,color=blue]{R1}
\Vertex[x=.85,y=.9,layer=2,size=.5,style={shading=ball,opacity=.2},opacity=.2,fontscale=1.25,label=X,color=green]{X2}
\Vertex[x=1.75,y=.9,layer=2,size=.5,style={shading=ball,opacity=.2},opacity=.2,fontscale=1.25,label=Y,color=green]{Y2}
\Vertex[x=2.65,y=.9,layer=2,size=.5,style={shading=ball,opacity=.2},opacity=.2,fontscale=1.25,label=Z,color=green]{Z2}
\Vertex[x=.85,y=0,layer=2,size=.5,style={shading=ball,opacity=.2},opacity=.2,fontscale=1.25,label=U,color=green]{U2}
\Vertex[x=1.75,y=0,layer=2,size=.5,style={shading=ball,opacity=.2},opacity=.2,fontscale=1.25,label=V,color=green]{V2}
\Vertex[x=2.65,y=0,layer=2,size=.5,style={shading=ball,opacity=.2},opacity=.2,fontscale=1.25,label=W,color=green]{W2}
\Vertex[x=.85,y=-.9,layer=2,size=.5,style={shading=ball,opacity=.2},opacity=.2,fontscale=1.25,label=P,color=green]{P2}
\Vertex[x=1.75,y=-.9,layer=2,size=.5,style={shading=ball,opacity=.2},opacity=.2,fontscale=1.25,label=Q,color=green]{Q2}
\Vertex[x=2.65,y=-.9,layer=2,size=.5,style={shading=ball,opacity=.2},opacity=.2,fontscale=1.25,label=R,color=green]{R2}
\Vertex[x=2.6,y=.9,layer=3,size=.5,style={shading=ball,opacity=.2},opacity=.2,fontscale=1.25,label=X,color=orange]{X3}
\Vertex[x=3.5,y=.9,layer=3,size=.5,style={shading=ball,opacity=.2},opacity=.2,fontscale=1.25,label=Y,color=orange]{Y3}
\Vertex[x=4.4,y=.9,layer=3,size=.5,style={shading=ball,opacity=.2},opacity=.2,fontscale=1.25,label=Z,color=orange]{Z3}
\Vertex[x=2.6,y=0,layer=3,size=.5,style={shading=ball,opacity=.2},opacity=.2,fontscale=1.25,label=U,color=orange]{U3}
\Vertex[x=3.5,y=0,layer=3,size=.5,style={shading=ball,opacity=.2},opacity=.2,fontscale=1.25,label=V,color=orange]{V3}
\Vertex[x=4.4,y=0,layer=3,size=.5,style={shading=ball,opacity=.2},opacity=.2,fontscale=1.2,label=W,color=orange]{W3}5
\Vertex[x=2.6,y=-0.9,layer=3,size=.5,style={shading=ball,opacity=.2},opacity=.2,fontscale=1.25,label=P,color=orange]{P3}
\Vertex[x=3.5,y=-0.9,layer=3,size=.5,style={shading=ball,opacity=.2},opacity=.2,fontscale=1.25,label=Q,color=orange]{Q3}
\Vertex[x=4.4,y=-0.9,layer=3,size=.5,style={shading=ball,opacity=.2},opacity=.2,fontscale=1.25,label=R,color=orange]{R3}
\Edge[style={dashed},opacity=.3](X1)(X2)
\Edge[style={dashed},opacity=.3](X2)(X3)
\Edge[style={dashed},opacity=.3](Y1)(Y2)
\Edge[style={dashed},opacity=.3](Y2)(Y3)
\Edge[style={dashed},opacity=.3](Z1)(Z2)
\Edge[style={dashed},opacity=.3](Z2)(Z3)
\Edge[style={dashed},opacity=.3](U1)(U2)
\Edge[style={dashed},opacity=.3](U2)(U3)
\Edge[style={dashed},opacity=.3](V1)(V2)
\Edge[style={dashed},opacity=.3](V2)(V3)
\Edge[style={dashed},opacity=.3](W1)(W2)
\Edge[style={dashed},opacity=.3](W2)(W3)
\Edge[style={dashed},opacity=.3](P1)(P2)
\Edge[style={dashed},opacity=.3](P2)(P3)
\Edge[style={dashed},opacity=.3](Q1)(Q2)
\Edge[style={dashed},opacity=.3](Q2)(Q3)
\Edge[style={dashed},opacity=.3](R1)(R2)
\Edge[style={dashed},opacity=.3](R2)(R3)
\Edge[lw=1pt,color=gray](X1)(U1)
\Edge[lw=1pt,color=gray](X1)(Y1)
\Edge[lw=1pt,color=gray](X1)(V1)
\Edge[lw=1pt,color=gray](Y1)(Z1)
\Edge[lw=1pt,color=gray](V1)(W1)
\Edge[lw=1pt,color=gray](V1)(Z1)
\Edge[lw=1pt,color=gray](V1)(P1)
\Edge[lw=1pt,color=gray](P1)(Q1)
\Edge[lw=1pt,color=gray](Q1)(R1)
\Edge[lw=1pt,color=gray](V1)(Q1)
\Edge[lw=1pt,color=gray](X2)(U2)
\Edge[lw=1pt,color=gray](X2)(Y2)
\Edge[lw=1pt,color=gray](U2)(V2)
\Edge[lw=1pt,color=gray](Y2)(Z2)
\Edge[lw=1pt,color=gray](W2)(V2)
\Edge[lw=1pt,color=gray](U2)(P2)
\Edge[lw=1pt,color=gray](P2)(Q2)
\Edge[lw=1pt,color=gray](Q2)(R2)
\Edge[lw=1pt,color=gray](X3)(Y3)
\Edge[lw=1pt,color=gray](U3)(Y3)
\Edge[lw=1pt,color=gray](U3)(V3)
\Edge[lw=1pt,color=gray](V3)(W3)
\Edge[lw=1pt,color=gray](V3)(Z3)
\Edge[lw=1pt,color=gray](V3)(R3)
\Edge[lw=1pt,color=gray](V3)(Q3)
\Edge[lw=1pt,color=gray](P3)(Q3)
\end{tikzpicture}
\caption{Multiplex network with 3 layers and 9 nodes}\label{mplx}
\end{figure}
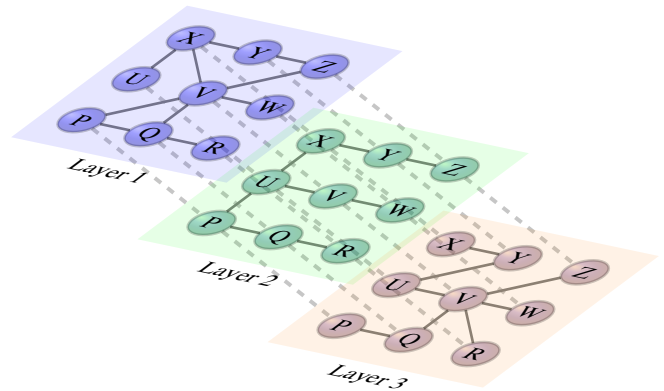

\vspace{-.4cm}While interest in multiplex networks has grown across communities and there is prior work investigating centrality and community structure \cite{kivela:14,kanawati:15}, there is limited existing work on link prediction. In contrast to the multitude of simple heuristics for link prediction in ordinary networks, which have been thoroughly investigated empirically \cite{libennowell-04} and theoretically \cite{sarkar:2011}, we are not aware of any general heuristics for link prediction at specific multiplex network layers.

We propose a novel general framework and three families of heuristics for link prediction in multiplex networks which take advantage of strong \emph{cross-layer correlation} structure, which has been observed in many real-world complex systems \cite{nicosia:15}. We show that the performance of the proposed heuristics increases with the strength of cross-layer correlations and they outperform their baseline heuristics in synthetically generated and real world multiplex networks. 

\section{Background} 

We represent an ordinary undirected graph as  $\mathcal{G}=\langle\mathcal{V},\mathcal{E}\rangle$, where $\mathcal{V}$ is a set of nodes and $\mathcal{E}$ a set of edges. Distinct nodes $v, v' \in  \mathcal{V}$ are \emph{neighbors} if they are connected by an edge in $\mathcal{E}$; otherwise, they are \emph{non-neighbors}. $\mathcal{N}(v)$ represents the set of neighbors of $v \in \mathcal{V}$.
The \emph{degree} of a node is the cardinality of its neighbors set. A \emph{path} between $u, w \in \mathcal{V}$ is an ordered set $\langle v_1, \dots, v_n\rangle \subset \mathcal{V}$ such that $u \in \mathcal{N}(v_1)$, $w \in \mathcal{N}(v_n)$ and for $1 \leq i < n$, $v_i \in \mathcal{N}(v_{i+1})$.
We restricted our analysis to \emph{undirected} graphs in this paper.

\subsection{Link Prediction in Single-Layer Networks}

\cite{libennowell-04} provides the first comprehensive introduction to and analysis of the link prediction problem. Recent surveys include \cite{lu:11} and \cite{martinez:16}.

Link prediction is often posed as a ranking problem where pairs of non-neighbors are scored according to the predicted likelihood of a future or missing connection and the top $k$ highest scoring pairs are selected. It can also be posed as a binary classification problem where the class of a pair of nodes is whether or not a link exists. 

The most extensively studied link prediction techniques are based on \emph{similarity heuristics}, which score pairs of nodes according to topological features of the network related to coherent assumptions about their similarity \cite{martinez:16}. Most similarity heuristics are adapted from techniques from graph theory and social network analysis \cite{libennowell-04}. We define and discuss some of the most common heuristics below. A more comprehensive list is provided in \cite{martinez:16}.

\emph{Neighbor-based} heuristics are based on the idea that a link is most likely to exist between nodes $v$ and $v'$ whose sets of neighbors significantly overlap. This property has been empirically observed in real world networks \cite{newman:01}. The heuristic which most directly implements this concept is \emph{Common Neighbors (CN)}, which is simply the cardinality of the intersection of neighbor sets \cite{newman:01}:
\[
CN\left(v,v\right)=\left|\mathcal{N}(v) \cap \mathcal{N}\left(v'\right)\right|
\]
\noindent A related measure is the \emph{Jaccard Coefficient (JC)}, which is the ratio of this intersection to the union of the neighbor sets:
\[
JC\left(v,v'\right)=\cfrac{\left|\mathcal{N}\left(v\right) \cap \mathcal{N}\left(v'\right)\right|}{\left|\mathcal{N}\left(v\right) \cup \mathcal{N}\left(v'\right)\right|}
\]
\noindent \emph{Resource Allocation (RA)} and \emph{Adamic-Adar (AA)} \cite{adamic:01} score links inversely proportional to the number of neighbors of each common neighbor of two nodes:
\begin{align*}
RA\left(v,v'\right) &= \sum\limits_{u \in \mathcal{N}\left(v\right) \cap \mathcal{N}\left(v'\right)} \cfrac{1}{\left|\mathcal{N}\left(u\right)\right|}\\
AA\left(v,v'\right)&=\sum\limits_{u \in \mathcal{N}\left(v\right) \cap \mathcal{N}\left(v'\right)} \cfrac{1}{\log \left|\mathcal{N}\left(u\right)\right|}
\end{align*}
\noindent \emph{Preferential Attachment (PA)}, adapted from the Barab\'asi-Albert network growth model \cite{barabasi:99}, is the product of node degrees \cite{barabasi:02}:
\[
PA\left(v,v'\right)=\left|\mathcal{N}\left(v\right)\right| \times \left|\mathcal{N}\left(v'\right)\right|
\]
\noindent The \emph{Product of Clustering Coefficient (PCC)} scores the likelihood of a link proportional to the product of the nodes' \emph{clustering coefficients}, or number of links between nodes that are neighbors proportional to the total possible links between those nodes:
\[
PCC\left(v,v'\right)=\prod\limits_{w \in \{v,v'\}}\cfrac{2\left|\left\{u,u' \in \mathcal{N}(w): u' \in \mathcal{N}(u)\right\}\right|}{\left|\mathcal{N}(w)\right|\left(\left|\mathcal{N}(w)\right|-1\right)}
\]
These heuristics are simple, interpretable, computationally efficient and highly parallelizable. Their primary disadvantage is they do not consider paths between nodes without common neighbors \cite{martinez:16}.

\emph{Path-based} heuristics consider all paths between nodes. The \emph{Katz Score (KS)} sums over all paths between two nodes and applies exponential dampening according to path lengths for specified $\beta$ \cite{katz:53}:
\[
KS\left(v,v'\right)=\sum\limits_{\mathbf{p}\in paths(v,v')} \beta^{\left|\mathbf{p}\right|}
\]
\noindent Smaller $\beta$ values result in a heuristic similar to neighbor-based approaches. \emph{Rooted PageRank (RPR)}, based on the PageRank measure for website authoritativeness \cite{brin:98}, is defined as the stationary probability that a random walk from $v$ to $v'$ with probability $1-\alpha$ of returning to $v$ and otherwise moving to a random neighbor reaches $v'$, represented as $[\pi_v]_{v'}$ \cite{tong:06}:
\[
RPR\left(v,v'\right)=[\pi_v]_{v'} + [\pi_{v'}]_v
\]

While comprehensive studies of link prediction have focused on unsupervised prediction using these heuristics, supervised and optimization-based approaches have also been considered. Most of these use similarity heuristics as features, sometimes with additional information, to train a classifier \cite{hasan:06,cukierski:11} or learn a weighting function \cite{bliss:14}. Empirical studies have found simple neighbor-based heuristics often perform as well or better than more complex methods \cite{martinez:16,libennowell-04}. There are some theoretical justifications for their success \cite{sarkar:2011}.

\subsection{Multiplex Networks}

For decades, different disciplines have proposed systems which organize different types of connections between entities, but only recently have there been significant efforts to develop general frameworks for studying networks with multiple layers or types of connections \cite{kivela:14}. This increased interest has resulted in disparate terminology and formulations of multiplex networks and related network representations.

One popular formulation of multiplex networks is a graph with multiple edge types which each correspond to different layers. We can represent a multiplex network as $\mathcal{G}=\langle \mathcal{V}, \mathcal{E}, \mathcal{T}\rangle$ where $\mathcal{T}$ is a set of edge types and each edge in $\mathcal{E}$ is between $v, v' \in \mathcal{V}$ and of type $t \in \mathcal{T}$. Other formulations allow for different node sets and edges which cross layers \cite{brodka-17}, sometimes referred to as \emph{heterogeneous networks}.
In our setup, edges are always within the same layer and node sets are common across layers. We can thus equivalently represent a multiplex network as a set of graphs with the same node set, where each graph represents a different layer in the network, e.g. $\bm{\mathcal{G}} = \langle \mathcal{G}^1,\dots,\mathcal{G}^k\rangle$.

\section{Multiplex Network Link Prediction Heuristics}

The framework we propose for specifying heuristics for link prediction in multiplex networks is inspired by the rich connection type correlation structures that have been empirically observed in many real world complex systems \cite{nicosia:15}. We provide a general approach to defining heuristics in terms of topological features across layers of a multiplex network weighted according to this structure. The motivation for this approach is that real world multiplex networks often contain sets of layers which are highly (positively or negatively) correlated but many pairs of layers which are not strongly correlated. When predicting links at a given layer, we would like to take advantage of structural information from other layers which are highly correlated, but ignore layers where correlations are weak.

\subsection{Cross-Layer Correlation}

First, we define correlation between multiplex network layers. Previous work comparing layers primarily considers layer similarity in terms of shared edges and hubs (high degree nodes) \cite{brodka-17,nicosia:15}; however, for specific problems, it may be appropriate to consider higher-order structural features \cite{benson:19}, e.g. shared triangles, or other contextual information. Our framework is general enough that it can be adapted to the specific needs of a particular application, allowing the specification of both relevant features and metrics used to define correlation.

As an initial step, we define a \emph{property matrix}, following \cite{brodka-17}, for a multiplex network which specifies the relevant features to consider cross-layer correlation in terms of. For example, to calculate cross-layer correlation in terms of shared edges, we construct the following property matrix $\mathbf{P}$ for the multiplex network depicted in Figure 1:
\[
\begin{blockarray}{rccccc}
&& X-Y & X-U & X-V & \\
\begin{block}{r[c@{}ccc@{}c}
\text{Layer 1} && 1 & 1 & 1 &\ \dots \\
\mathbf{P}=\quad
\text{Layer 2}&& 1 & 1 & 0 &\ \dots \\
\text{Layer 3} && 1 & 0 & 0 &\ \dots \\
\end{block}
\end{blockarray}
\]
\noindent Rows in $\mathbf{P}$ represent layers, and columns represent unique node pairs.
Entries of 1 or 0 indicate the presence or lack of an edge, respectively.
Similarly, to compare layers in terms of shared hubs,
we make the columns represent nodes and have the entries indicate the node degree in each layer. For a property matrix $\mathbf{P}$ we use $\mathbf{p}^i$ to indicate the \emph{property vector} for the $i$th layer and $p^i_j$ the value in the $j$th column for layer $i$. By convention, all vectors are treated as column vectors. When property matrices/vectors are defined in terms of shared edges or shared hubs, we refer to them as \emph{edge property matrices/vectors} or \emph{degree property matrices/vectors}, respectively.

We next construct a \emph{cross-layer correlation matrix} $\mathbf{C}$ from a $k \times x$ property matrix $\mathbf{P}$ by setting the diagonal entries in $\mathbf{C}$ to 1 and the off-diagonal entries $c_{i,j}$ to the value resulting from some correlation metric applied to the property vectors $\mathbf{p}^i$ and $\mathbf{p}^j$. For example, using Pearson correlation we get the following for the off-diagonals, where we represent the mean taken with respect to a property vector $i$ as $\bar{p}^i = \frac{1}{x}\sum_{j=1}^{x} p_j^i$:
\[
c_{i,j}=\frac{\left(\mathbf{p}^i-\bar{p}^i\right)'\left(\mathbf{p}^j-\bar{p}^j\right)}{\sqrt{\left(\mathbf{p}^i-\bar{p}^i\right)'\left(\mathbf{p}^i-\bar{p}^i\right)\left(\mathbf{p}^i-\bar{p}^j\right)'\left(\mathbf{p}^j-\bar{p}^j\right)}}
\]
While Pearson correlation is an appropriate metric for edge property matrices, Spearman (rank-based) correlation is more appropriate for degree property matrices since denser layers may have the same rank ordering of hubs, but with different degrees. We focus on \emph{correlation} metrics as opposed to general distance metrics since they distinguish positive from negative correlation, which has been observed in real world networks and which we account for in our proposed heuristics. 

\subsection{Multiplex Network Heuristics}

We now propose three multiplex network heuristics which use cross-layer correlation structure to weight features observed across layers. Each are defined in terms of a specified cross-layer correlation matrix $\mathbf{C}$, allowing for the use of any property matrix and correlation metric. First, we define the following normalization for a layer $i$ and $\mathbf{C}$:
\[
Z^i_\mathbf{C} = \sum_{l=1}^k \left|c_{i,l}\right|.
\]

The first and simplest heuristic, \emph{Count and Weight by Correlation (CWC)}, counts the number of layers which contain a link between two nodes and weights that count according to the cross-layer correlations.
\begin{heuristic}[Count and Weight by Correlation]
\label{def:cwc}
Let $\bm{\mathcal{G}}=\langle \mathcal{G}_1,\dots,\mathcal{G}_k\rangle$ be a multiplex network with edge property vectors $\mathbf{e}^1,\dots,\mathbf{e}^k$ and cross-layer correlation matrix $\mathbf{C}$. CWC is defined for a layer $i$ and a possible edge represented by an edge property vector index $j$ as follows:
\begin{align*}
\cfrac{1}{Z^i_\mathbf{C}}{\ } \sum\limits_{l=1}^k 
\begin{cases} 
      e^i_j c_{i,l},  &  c_{i,l} > 0 \\
      \left(1-e^i_j\right)\left|c_{i,l}\right|,  & c_{i,l} < 0
\end{cases}
\end{align*}
\end{heuristic}
\noindent For example, to consider a link in the multiplex network in Figure 1 between $X$ and $V$ at layer 2 using CWC, we would proceed with the following calculation (assuming only positive correlations):
\[
\frac{1}{Z^2_\mathbf{C}}\left(1 \times c_{2,1} + 0 \times c_{2,3}\right)
\]
Only $c_{2,1}$ receives weight in the numerator since $X$ and $V$ are connected in layer 1 but not in layer 3.

CWC encodes the intuition that correlated layers should have similar links: the more correlated a layer which does not contain a particular link is to another layer which does contain that link, the more likely it is that link is missing or will develop in the future. CWC also takes anti-correlation into account: a link is more likely to be predicted if it is missing from a layer which is anti-correlated. Despite its simplicity, this heuristic performs extremely well in practice. 

The second heuristic, \emph{Correlation Weighted Heuristic (CWH)}, extends the heuristics discussed in the previous section to the multiplex domain by applying them across layers of a multiplex network and weighting them according to cross-layer correlations.
While empirical studies have found that no particular monoplex heuristic consistently outperforms all others \cite{libennowell-04}, there may be problem-specific reasons to prefer a particular heuristic.
For example, if we know there are few long paths between nodes, a neighbor-based heuristic is likely to perform at least as well as a path-based heuristic at a lower computational cost.
Taking this into consideration, CWH allows any monoplex heuristic to be extended to multiplex networks.
\begin{heuristic}[Correlation Weighted Heuristic]
\label{def:cwh}
Let $\bm{\mathcal{G}}=\langle \mathcal{G}_1,\dots,\mathcal{G}_k\rangle$ be a multiplex network with cross-layer correlation matrix $\mathbf{C}$. Let $h^l_j$ be a heuristic for monoplex networks evaluated at layer $l$ of $\bm{\mathcal{G}}$ for a possible edge represented by an edge property vector index $j$. Then, CWH is defined for a layer $i$ and possible edge index $j$ as follows:
\begin{align*}
\cfrac{1}{Z^i_\mathbf{C}}{\ } \sum\limits_{l=1}^k 
\begin{cases} 
      h^l_j c_{i,l},  &  c_{i,l} > 0 \\
      \left(1-h^l_j\right)\left|c_{i,l}\right|,  & c_{i,l} < 0
 \end{cases}
\end{align*}
\end{heuristic}
\noindent For example, to consider a link in the multiplex network in Figure 1 between $X$ and $V$ at layer 2 using CWH with Common Neighbors as the monoplex heuristic, we would proceed with the following calculation (assuming only positive correlations):
\[
\frac{1}{Z^2_\mathbf{C}}\left[CN^1\left( X, V \right) \times c_{2,1} + CN^2\left( X, V \right) + CN^3\left( X, V \right) \times c_{2,3}\right]
\]
CWH is similarly based on the intuition that since existing monoplex heuristics have been shown to be predictive of missing and future links in single-layer networks, they should also be predictive in correlated layers of multiplex networks and this predictive power should increase based on the magnitude of correlations. Like CWC, CWH takes anti-correlation into account: links are more likely to be predicted if they are not strongly predicted by a monoplex heuristic in an anti-correlated layer. In our definition, we assume the monoplex heuristic $h$ is normalized to be within 0 and 1. 

The third heuristic combines the previous two ideas. For a given a monoplex heuristic, \emph{Count Correlation-Weighted Heuristics (CCWH)} counts the number of layers which contain a link between two nodes and weights that count according to both cross-layer correlations and the values resulting from evaluating the monoplex heuristic at each layer in the network.

\begin{heuristic}[Count Correlation-Weighted Heuristics]
Let $\bm{\mathcal{G}}=\langle \mathcal{G}_1,\dots,\mathcal{G}_k\rangle$ be a multiplex network with edge property vectors $\mathbf{e}^1,\dots,\mathbf{e}^k$ and cross-layer correlation matrix $\mathbf{C}$. Let $h^l_j$ be a similarity heuristic for monoplex networks evaluated at layer $l$ of $\bm{\mathcal{G}}$ for a possible edge represented by an edge property vector index $j$. Then, CCWH is defined for a layer $i$ and possible edge index $j$ as follows:
\begin{align*}
\cfrac{1}{Z^i_\mathbf{C}}{\ } \sum\limits_{l=1}^k 
\begin{cases} 
      h^{i}_j ,  &  i=l \\
      e^i_j h^{i}_j c_{i,l},  &  c_{i,l} > 0 \\
      \left(1-e^i_j\right)\left(1-h^{i}_j\right)\left|c_{i,l}\right|,  & c_{i,l} < 0
 \end{cases}
\end{align*}
\end{heuristic}
\noindent CCWH also accounts for negative correlation: links are more likely if they are not present in an anti-correlated layer and the magnitudes of these predictions are inversely proportional to the values of the heuristic evaluated at that layer. We also include the heuristic evaluated at the layer being predicted so that CCWH yields informative values even when there are no layers containing the edge being predicted.

\subsection{Expected Overlap Threshold for Layers}

One potential issue with using cross-layer correlation as weights in the proposed heuristics is the sensitivity of many correlation metrics to sample size error. When layers are not related, we may still observe small correlation values which add noise. This may be particularly acute when networks have small numbers of nodes but many layers. To improve empirical performance in such cases, we propose a thresholding method to ignore layers likely to only add noise.

One possibility is to simply ignore small values of correlation, but there is no clear guideline for setting the threshold for values to ignore. Instead, we propose a threshold for excluding layers based on properties of the two graphs being compared. If two graphs are related, especially in the context of link prediction, we expect them to have edges in common. Thus, we should expect that a layer $l$ used for predicting a link at another layer $i$ has at least as many overlapping edges with $i$ as a random graph. However, graphs with many edges are more likely to have overlapping edges so we should only consider random graphs with the same number of edges as the layer for which we are predicting links. The \ER{}  $\mathcal{G}_{n,m}$ random graph model \cite{erdos:59}, which uniformly considers all undirected graphs with $n$ nodes and $m$ edges. provides a theoretical framework for this comparison. Let $\mathcal{G}^i$ be an observed layer with $n$ nodes and $m^i$ edges at which we would like to predict links and let $\mathcal{G}^l$ be some other layer with $m^l$ edges. We define the expected number of overlapping edges (OE) in terms of the cosine distance between the edge property vector $\mathbf{p}^i$ for $\mathcal{G}^i$ and the edge property vector $\mathbf{p}^j$ for a random graph with $m^j=m^l$ edges generated according to the \ER{} random process:
\begin{align*}
\mathbb{E}\left( OE(\mathcal{G}^i, m^j)\right)=\mathbb{E}\left( \frac{\mathbf{p}^{i}{'}\mathbf{p}^j}{\sqrt{\mathbf{p}^{i}{'}\mathbf{p}^i\mathbf{p}^{j}{'}\mathbf{p}^j}} \middle\vert\ \mathcal{G}^i, m^j \right)
\end{align*}
To evaluate this quantity, we need the following lemma.
\newtheorem{lemma}{Lemma}
\begin{lemma}
Let $\mathcal{G} = \langle \mathcal{V}, \mathcal{E}\rangle$ be a graph with $n$ nodes, $m$ edges and edge property vector $\mathbf{p}$ that is generated according to an \ER{} $\mathcal{G}_{n,m}$ random graph process. Then, for $1 \leq i \leq \frac{n(n-1)}{2}$,
\[
\mathbb{E}\left( p_i\ |\ m \right) = \frac{2m}{n(n-1)}
\]
\end{lemma}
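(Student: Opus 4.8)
The plan is to exploit the fact that each entry $p_i$ of the edge property vector is a $\{0,1\}$-valued indicator: $p_i = 1$ precisely when the $i$th of the $\binom{n}{2} = \frac{n(n-1)}{2}$ possible node pairs is realized as an edge, and $p_i = 0$ otherwise. Hence the conditional expectation collapses to a probability, $\mathbb{E}(p_i \mid m) = \Pr(p_i = 1 \mid m)$, and the task reduces to computing the probability that a fixed node pair appears as an edge in a graph drawn uniformly from all graphs on $n$ nodes with exactly $m$ edges.

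First I would invoke the defining symmetry of the $\mathcal{G}_{n,m}$ model: since it places uniform probability over all $\binom{\binom{n}{2}}{m}$ labeled graphs with $m$ edges, the $\binom{n}{2}$ candidate edge slots are exchangeable, so $\Pr(p_i = 1 \mid m)$ takes the same value for every index $i$. Call this common value $q$. Because every such graph has exactly $m$ edges, the indicators satisfy the deterministic constraint $\sum_{i=1}^{\binom{n}{2}} p_i = m$, and taking conditional expectations of both sides gives $\binom{n}{2}\, q = m$. Solving yields $q = m / \binom{n}{2} = \frac{2m}{n(n-1)}$, which is exactly the claimed value.

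As an alternative (or cross-check) I would carry out the direct count: the number of $m$-edge graphs in which a fixed pair is an edge is $\binom{\binom{n}{2}-1}{m-1}$, so
\[
\Pr(p_i = 1 \mid m) = \frac{\binom{\binom{n}{2}-1}{m-1}}{\binom{\binom{n}{2}}{m}} = \frac{m}{\binom{n}{2}} = \frac{2m}{n(n-1)},
\]
using the standard absorption identity $\binom{N-1}{m-1}\big/\binom{N}{m} = m/N$ with $N = \binom{n}{2}$.

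Honestly, there is no serious obstacle here: the result is a one-line consequence of exchangeability together with the fixed edge count, and the combinatorial identity is elementary. The only point requiring care is bookkeeping the constraint $1 \le i \le \frac{n(n-1)}{2}$ so that $p_i$ genuinely ranges over the full set of candidate edges, which guarantees the symmetry argument applies uniformly and that the sum of the indicators is exactly $m$. I would present the symmetry-plus-linearity argument as the main proof, since it is the cleanest and avoids manipulating binomial coefficients, and relegate the explicit count to a remark.
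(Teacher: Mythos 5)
Your proof is correct, but it follows a genuinely different route from the paper. You work with the \emph{static} description of the $\mathcal{G}_{n,m}$ model (uniform distribution over all $m$-edge graphs), so that exchangeability of the $\binom{n}{2}$ edge slots plus the deterministic constraint $\sum_i p_i = m$ yields the value $m/\binom{n}{2}$ in one line via linearity of expectation; your counting identity gives the same thing directly. The paper instead works with the \emph{sequential} description of the \ER{} process: it computes $\mathbb{P}(p_i = 0 \mid m)$ as the probability that the fixed pair is never selected during any of the $m$ edge-insertion steps, namely the product
\[
\prod_{k=1}^{m}\frac{\frac{n(n-1)}{2}-k}{\frac{n(n-1)}{2}-k+1},
\]
which telescopes to $\bigl(\frac{n(n-1)}{2}-m\bigr)\big/\frac{n(n-1)}{2}$, giving the same answer. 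The two views of the model are equivalent, so both arguments are sound. Your symmetry-plus-linearity argument is cleaner and more robust --- it needs no telescoping bookkeeping and would survive any exchangeable edge-placement scheme with a fixed edge count --- while the paper's process-based calculation has the minor virtue of matching the way the model is introduced in the text (as a random \emph{process}) and of setting up the style of conditional, step-by-step reasoning reused in its second-moment lemma. One small caution if you present the symmetry argument: state explicitly that uniformity over $m$-edge graphs is what licenses exchangeability, since the lemma as phrased describes the model as a process and that equivalence is the bridge your proof relies on.
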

\begin{proof}
During the $k$th step of an \ER{} random process, the probability that a non-neighbor tuple $v, v' \in \mathcal{V}$ is not selected is
\[\frac{\frac{n(n-1)}{2}-k}{\frac{n(n-1)}{1}-k+1}\]
Therefore,
\begin{align*}
\mathbb{E}\left( p_i\ |\ m \right)& = (0)\mathbb{P}\left( p_i = 0\ |\ m \right) + (1)\mathbb{P}\left( p_i = 1\ |\ m \right)\\
 & = \mathbb{P}\left( p_i = 1\ |\ m \right)\\
 & = 1 - \mathbb{P}\left( p_i = 0\ |\ m \right)\\
 & = 1 - \prod_{k=1}^{m}\frac{\frac{n(n-1)}{2}-k}{\frac{n(n-1)}{2}-k+1}\\
 & = 1 - \frac{\frac{n(n-1)}{2}-m}{\frac{n(n-1)}{2}}\\
 & = \frac{2m}{n(n-1)}
\end{align*}
\end{proof}
\begin{theoremdef}
Let $\mathcal{G}^i = \langle \mathcal{V}, \mathcal{E}^i \rangle$ be an observed graph with $n$ nodes, $m^i$ edges and edge property vector $\mathbf{p}^i$ and $\mathcal{G}^j = \langle \mathcal{V}, \mathcal{E}^j \rangle$ a graph generated from an \ER{} $\mathcal{G}_{n,m^j}$ random process with edge property vector $\mathbf{p}^j$. Then,
\begin{align*}
\mathbb{E}\left( OE(\mathcal{G}^i, m^j)\right) = \frac{2\sqrt{m^im^j}}{n(n-1)}
\end{align*}
\end{theoremdef}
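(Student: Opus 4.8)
The plan is to exploit the fact that this seemingly awkward quantity—the expectation of a ratio of random inner products—collapses to the expectation of a linear functional, because the denominator is in fact deterministic. First I would note that an edge property vector has entries in $\{0,1\}$, with a $1$ in each position corresponding to a present edge, so $\mathbf{p}^{i}{'}\mathbf{p}^i = m^i$ and $\mathbf{p}^{j}{'}\mathbf{p}^j = m^j$. The latter holds with certainty because the \ER{} $\mathcal{G}_{n,m^j}$ model produces graphs with exactly $m^j$ edges. Consequently the denominator $\sqrt{\mathbf{p}^{i}{'}\mathbf{p}^i\,\mathbf{p}^{j}{'}\mathbf{p}^j}=\sqrt{m^i m^j}$ is constant and may be pulled out of the expectation.

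Next I would expand the numerator as a sum over the $\tfrac{n(n-1)}{2}$ candidate edges, $\mathbf{p}^{i}{'}\mathbf{p}^j=\sum_{l} p^i_l p^j_l$, and invoke linearity of expectation. Since $\mathcal{G}^i$ is the fixed observed graph, each $p^i_l$ is a known constant and factors out, so that $\mathbb{E}(p^i_l p^j_l \mid m^j)=p^i_l\,\mathbb{E}(p^j_l \mid m^j)$. Applying the Lemma gives $\mathbb{E}(p^j_l \mid m^j)=\tfrac{2m^j}{n(n-1)}$ for every index $l$, whence the expectation of the numerator equals $\tfrac{2m^j}{n(n-1)}\sum_{l} p^i_l = \tfrac{2 m^i m^j}{n(n-1)}$, using that the entries of $\mathbf{p}^i$ sum to the edge count $m^i$.

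Finally I would divide by the constant denominator and simplify, obtaining $\dfrac{2 m^i m^j/(n(n-1))}{\sqrt{m^i m^j}}=\dfrac{2\sqrt{m^i m^j}}{n(n-1)}$, the claimed identity. The one step I expect to require care, and the crux of the argument, is the observation that both vector norms are fixed; without it one would confront the expectation of a ratio, which is generally intractable. Once that is recognized, the remainder is only linearity of expectation together with the Lemma, so no independence or concentration machinery is needed.
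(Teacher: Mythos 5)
Your proposal is correct and follows exactly the paper's own argument: the denominator $\sqrt{\mathbf{p}^{i}{'}\mathbf{p}^i\,\mathbf{p}^{j}{'}\mathbf{p}^j}=\sqrt{m^im^j}$ is deterministic and factors out, the numerator is expanded entrywise and handled by linearity of expectation with the fixed $p^i_k$ pulled out, and Lemma 1 supplies $\mathbb{E}(p^j_k\mid m^j)=\tfrac{2m^j}{n(n-1)}$ before summing to $m^i$. No gaps; this is the same proof.
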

\begin{proof}
\begin{align*}
\mathbb{E}\left( OE(\mathcal{G}^i, m^j)\right)
& =\mathbb{E}\left( \frac{\mathbf{p}^{i}{'}\mathbf{p}^j}{\sqrt{\mathbf{p}^{i}{'}\mathbf{p}^i\mathbf{p}^{j}{'}\mathbf{p}^j}}\ \middle\vert\ \mathcal{G}^i, m^j \right)\\
& = \mathbb{E}\left( \frac{\mathbf{p}^{i}{'}\mathbf{p}^j}{\sqrt{m^im^j}}\ \middle\vert\ \mathcal{G}^i, m^j \right)\\
 & = \frac{1}{\sqrt{m^im^j}}\mathbb{E}\left( \sum_{k=1}^{\frac{n(n-1)}{2}}p^i_kp^j_k\ \middle\vert\ \mathcal{G}^i, m^j \right)\\
 & = \frac{1}{\sqrt{m^im^j}}\sum_{k=1}^{\frac{n(n-1)}{2}}p^i_k\mathbb{E}\left(p^j_k\ \middle\vert\ m^j \right)\\
 & = \frac{1}{\sqrt{m^im^j}}\sum_{k=1}^{\frac{n(n-1)}{2}}p^i_k\frac{2m^j}{n(n-1)}\\
 & = \frac{1}{\sqrt{m^im^j}}(m^i)\frac{2m^j}{n(n-1)}\\
 & = \frac{2\sqrt{m^im^j}}{n(n-1)}
\end{align*}
\end{proof}

The expected overlapping edges can be calculated whenever another layer is considered when evaluating a heuristics at a layer $i$ and ignored if the observed cosine distance between that layer's edge property vector and the edge property vector for layer $i$ is less than this quantity. However, we might also wish to consider only layers that are several standard deviations from a random graph. We thus need the following lemma to evaluate the second moment.
\begin{lemma}
Let $\mathcal{G} = \langle \mathcal{V}, \mathcal{E}\rangle$ be a graph with $n$ nodes, $m$ edges and edge property vector $\mathbf{p}$ generated according to an \ER{} $\mathcal{G}_{n,m}$ random graph process. Then, for $1 \leq i, j \leq \frac{n(n-1)}{2}$ such that $i\neq j$,
\[\mathbb{E}\left(p_ip_j\ |\ m \right) = \frac{4m(m-1)}{n(n-2)(n^2-1)}\]
\end{lemma}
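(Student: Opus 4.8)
The plan is to exploit the fact that each entry $p_k$ of the edge property vector is a $\{0,1\}$ indicator, so the product $p_i p_j$ equals $1$ precisely when both of the distinct candidate edges indexed by $i$ and $j$ are present in the sampled graph. Consequently $\mathbb{E}\left(p_i p_j \mid m\right) = \mathbb{P}\left(p_i = 1, p_j = 1 \mid m\right)$, and the whole problem reduces to computing the joint probability that two prescribed edges both appear. This is the natural second-moment analogue of Lemma 1, where the same binary-indicator observation collapsed the expectation to a single presence probability.

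Writing $N = \frac{n(n-1)}{2}$ for the number of candidate edge positions, the \ER{} $\mathcal{G}_{n,m}$ process selects an $m$-subset of these $N$ positions uniformly at random. I would count the joint event by a hypergeometric argument: the number of $m$-subsets containing both positions $i$ and $j$ is $\binom{N-2}{m-2}$, so that
\[
\mathbb{P}\left(p_i = 1, p_j = 1 \mid m\right) = \frac{\binom{N-2}{m-2}}{\binom{N}{m}} = \frac{m(m-1)}{N(N-1)}.
\]
To stay closer to the sequential style of the preceding proof, one may instead factor $\mathbb{P}\left(p_i = 1, p_j = 1\right) = \mathbb{P}\left(p_i = 1\right)\,\mathbb{P}\left(p_j = 1 \mid p_i = 1\right)$, using $\mathbb{P}\left(p_i = 1\right) = \frac{m}{N}$ from Lemma 1 together with the observation that conditioning on one edge being present leaves the remaining $m-1$ edges uniformly distributed over the other $N-1$ positions, giving $\mathbb{P}\left(p_j = 1 \mid p_i = 1\right) = \frac{m-1}{N-1}$. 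Both routes yield the same expression.

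The remaining step is purely algebraic: substitute $N = \frac{n(n-1)}{2}$, note that $N - 1 = \frac{(n-2)(n+1)}{2}$, hence $N(N-1) = \frac{n(n-1)(n-2)(n+1)}{4}$, and recognize $(n-1)(n+1) = n^2 - 1$ to obtain
\[
\frac{m(m-1)}{N(N-1)} = \frac{4m(m-1)}{n(n-2)(n^2-1)},
\]
as claimed. I expect no genuine obstacle: the content is elementary once the indicator reduction is made, and the only points demanding care are the factorization $N - 1 = \frac{(n-2)(n+1)}{2}$ that reassembles the stated $n^2 - 1$ factor, and the implicit appeal to edge exchangeability in the $\mathcal{G}_{n,m}$ model used when computing the conditional probability $\mathbb{P}\left(p_j = 1 \mid p_i = 1\right)$.
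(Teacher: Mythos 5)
Your proof is correct, but it is not the argument the paper gives. You compute $\mathbb{E}\left(p_ip_j \mid m\right)$ directly as a joint inclusion probability, either via the hypergeometric count $\binom{N-2}{m-2}/\binom{N}{m}$ or by sequential conditioning; the paper instead proceeds indirectly: since the total number of edges is deterministically $m$, it writes $m^2 = \mathbb{E}\left(\left[\sum_k p_k\right]^2 \mid m\right)$, splits the double sum into $N$ diagonal terms, each equal to $\frac{2m}{n(n-1)}$ by Lemma 1 (using $p_k^2 = p_k$), and $N(N-1)$ off-diagonal terms, all equal by exchangeability, and then solves the resulting linear equation $m^2 = m + \frac{n^2(n-1)^2-2n(n-1)}{4}\,\mathbb{E}\left(p_ip_j \mid m\right)$ for the desired quantity. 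The two routes trade different things. Yours is self-contained and makes the answer transparent as a ratio of binomial coefficients, and it generalizes immediately to higher-order joint moments $\mathbb{E}\left(p_{i_1}\cdots p_{i_r} \mid m\right) = \binom{N-r}{m-r}/\binom{N}{m}$; but it commits explicitly to the uniform-$m$-subset description of $\mathcal{G}_{n,m}$ (which the paper does state, so this is legitimate, and your closing caveat about edge exchangeability flags exactly the right assumption). The paper's route needs only the first-moment Lemma 1, exchangeability, and the fact that $\sum_k p_k$ is constant given $m$ --- the classic device for extracting pairwise correlations under sampling without replacement --- at the cost of being less direct and ending in the same factoring step you carry out explicitly via $N-1 = \frac{(n-2)(n+1)}{2}$. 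Both proofs hinge on the same symmetry, and your algebra checks out.
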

\begin{proof}
First note that
\begin{align*}
m^2 & = \mathbb{E}\left(m^2 \right) =  \mathbb{E}\left(\left[\sum_{i=1}^{\frac{n(n-1)}{2}}p_i\right]\left[\sum_{j=1}^{\frac{n(n-1)}{2}}p_j\right]\middle\vert m \right) \\
& = \sum_{i=1}^{\frac{n(n-1)}{2}}\sum_{j=1}^{\frac{n(n-1)}{2}}\mathbb{E}\left(p_ip_j\middle\vert m \right)\\
& = \sum_{i=1}^{\frac{n(n-1)}{2}}\mathbb{E}\left(\left(p_i\right)^2\middle\vert m \right)+ \sum_{j,i=1 s.t. j \neq i}^{\frac{n(n-1)}{2}}\mathbb{E}\left(p_ip_j\middle\vert m \right)\\
& = \left(\frac{n(n-1)}{2}\right)\left(\frac{2m}{n(n-1)}\right)\\
 &\ \ \ \ \ \ + \left(\frac{n(n-1)}{2}\right)\left(\frac{n(n-1)}{2}-1\right)\mathbb{E}\left(p_ip_j\middle\vert m \right)\\
& = m + \frac{n^2(n-1)^2 -2n(n-1)}{4}\mathbb{E}\left(p_ip_j\middle\vert m \right)
\end{align*}
Factoring yields \[\mathbb{E}\left(p_ip_j\ |\ m \right) = \frac{4m(m-1)}{n(n-2)(n^2-1)}\]
\end{proof}
\begin{theoremdef}
Let $\mathcal{G}^i = \langle \mathcal{V}, \mathcal{E}^i \rangle$ be an observed graph with $n$ nodes, $m^i$ edges and edge property vector $\mathbf{p}^i$ and $\mathcal{G}^j = \langle \mathcal{V}, \mathcal{E}^j \rangle$ a graph generated from an \ER{} $\mathcal{G}_{n,m^j}$ random process with edge property vector $\mathbf{p}^j$. Then,
\begin{align*}
\mathbb{E}\left( \left[ OE(\mathcal{G}^i, m^j)\right]^2\right) = \frac{2}{n(n-1)} + \frac{4\left(m^i-1\right)\left(m^j-1\right)}{n(n-2)(n^2-1)}
\end{align*}
\end{theoremdef}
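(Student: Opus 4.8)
The plan is to reduce the second moment to the first- and second-moment facts already established in Lemmas 1 and 2, paralleling the computation in the preceding theorem. First I would exploit that every edge property vector is $\{0,1\}$-valued, so that $\mathbf{p}^{i}{'}\mathbf{p}^i = \sum_k (p^i_k)^2 = \sum_k p^i_k = m^i$ and likewise $\mathbf{p}^{j}{'}\mathbf{p}^j = m^j$ deterministically. This collapses the denominator of $OE(\mathcal{G}^i, m^j)$ to the constant $\sqrt{m^i m^j}$, giving $OE(\mathcal{G}^i, m^j) = (m^i m^j)^{-1/2}\sum_k p^i_k p^j_k$, so that squaring produces a double sum.

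Next I would expand $[OE]^2 = (m^i m^j)^{-1}\sum_k \sum_l p^i_k p^i_l p^j_k p^j_l$ and take the conditional expectation given $\mathcal{G}^i$ and $m^j$. Since $\mathcal{G}^i$ is fixed, the factors $p^i_k p^i_l$ come out of the expectation as constants, leaving only $\mathbb{E}(p^j_k p^j_l \mid m^j)$ random. The crucial move is to split the double sum into its diagonal part ($k=l$) and its off-diagonal part ($k \neq l$), because these invoke different lemmas: on the diagonal $(p^j_k)^2 = p^j_k$, so Lemma 1 gives $\mathbb{E}((p^j_k)^2 \mid m^j) = 2m^j/[n(n-1)]$, whereas off the diagonal Lemma 2 supplies $\mathbb{E}(p^j_k p^j_l \mid m^j) = 4m^j(m^j-1)/[n(n-2)(n^2-1)]$.

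Finally I would evaluate the two remaining deterministic sums over the fixed vector $\mathbf{p}^i$: the diagonal contribution is governed by $\sum_k (p^i_k)^2 = m^i$, and the off-diagonal contribution by the identity $\sum_{k \neq l} p^i_k p^i_l = (\sum_k p^i_k)^2 - \sum_k (p^i_k)^2 = (m^i)^2 - m^i = m^i(m^i-1)$. Substituting these together with the two lemma values, the overall $(m^i m^j)^{-1}$ prefactor cancels the $m^i m^j$ appearing in both terms, leaving precisely $2/[n(n-1)] + 4(m^i-1)(m^j-1)/[n(n-2)(n^2-1)]$.

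There is no deep obstacle here; the computation is routine bookkeeping once the $\{0,1\}$ structure is used. The one place to be careful is the diagonal/off-diagonal separation: it must be applied \emph{simultaneously} to the random $p^j$ moments (which genuinely differ between the two cases via Lemmas 1 and 2) and to the deterministic $p^i$ sums, without conflating $\sum_k (p^i_k)^2$ with $(\sum_k p^i_k)^2$. Correctly pairing the diagonal $p^i$ sum with Lemma 1 and the off-diagonal $p^i$ sum with Lemma 2 is the only step where an error would propagate.
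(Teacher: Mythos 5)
Your proposal is correct and follows essentially the same route as the paper's proof: expand the square into a double sum, pull out the deterministic $p^i$ factors, split into diagonal and off-diagonal parts, and apply Lemma 1 to the former and Lemma 2 to the latter. The only cosmetic difference is that the paper restricts the double sum to the index set $\mathbf{I}^i_{+}$ where $p^i_k=1$ and counts $m^i$ diagonal and $m^i(m^i-1)$ off-diagonal terms directly, whereas you obtain the same counts via the identity $\sum_{k\neq l} p^i_k p^i_l = \left(\sum_k p^i_k\right)^2 - \sum_k \left(p^i_k\right)^2$.
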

\begin{proof}\ Partition the indices $1, \dots, \frac{n(n-1)}{2}$ into $\langle \mathbf{I}^i_{+}, \mathbf{I}^i_{-} \rangle$ such that for $1 \leq k \leq \frac{n(n-1)}{2}$, $k \in \mathbf{I}^i_{+}$ if and only if $p^i_k = 1$ and $k \in\mathbf{I}^i_{-}$ if and on if  $p^i_k = 0$. Then,\\
$\mathbb{E}\left( \left[ OE(\mathcal{G}^i, m^j)\right]^2\right)$ 
\begin{align*}
& = \mathbb{E}\left(\left[\frac{\mathbf{p}^{i}{'}\mathbf{p}^j}{\sqrt{\mathbf{p}^{i}{'}\mathbf{p}^i\mathbf{p}^{j}{'}\mathbf{p}^j}}\right]^2\ \middle\vert\ \mathcal{G}^i, m^j \right)\\
& = \mathbb{E}\left(\left[ \frac{\mathbf{p}^{i}{'}\mathbf{p}^j}{\sqrt{m^im^j}}\right]^2\ \middle\vert\ \mathcal{G}^i, m^j \right)\\
& = \frac{1}{m^im^j}\mathbb{E}\left(\left[ \sum_{k=1}^{\frac{n(n-1)}{2}}p^i_kp^j_k\right]^2 \ \middle\vert\ \mathcal{G}^i, m^j \right)\\
& = \frac{1}{m^im^j}\mathbb{E}\left( \sum_{k=1}^{\frac{n(n-1)}{2}}\sum_{l=1}^{\frac{n(n-1)}{2}}p^i_kp^j_kp^i_lp^j_l \ \middle\vert\ \mathcal{G}^i, m^j \right)\\
& = \frac{1}{m^im^j}\mathbb{E}\left( \sum_{k \in \mathbf{I}^i_{+}}\sum_{l\in\mathbf{I}^i_{+}}p^j_kp^j_l \ \middle\vert\ m^j \right)\\
& = \frac{1}{m^im^j}\sum_{k \in \mathbf{I}^i_{+}}\sum_{l\in\mathbf{I}^i_{+}}\mathbb{E}\left(p^j_kp^j_l \ \middle\vert\ m^j \right)\\
& = \frac{1}{m^im^j}\sum_{k \in \mathbf{I}^i_{+}}\mathbb{E}\left(\left(p^j_k\right)^2 \ \middle\vert\ m^j \right) \\ 
&\ \ \ \ \ + \frac{1}{m^im^j}\sum_{k, l \in\mathbf{I}^i_{+} s.t. k\neq l}\mathbb{E}\left(p^j_kp^j_l \ \middle\vert\ m^j \right)\\
& = \frac{1}{m^im^j}\sum_{k \in \mathbf{I}^i_{+}}\frac{2m^j}{n(n-1)} \\ 
&\ \ \ \ \ + \frac{1}{m^im^j}\sum_{k,l \in\mathbf{I}^i_{+} s.t. k\neq l}\frac{4m^j\left(m^j-1\right)}{n\left(n-2\right)\left(n^2-1\right)}\\
& = \frac{1}{m^im^j}\left(m^i\right)\frac{2m^j}{n(n-1)}\\
&\ \ \ \ \ + \frac{1}{m^im^j}\left(m^i\left(m^i-1\right)\right)\frac{4m^j\left(m^j-1\right)}{n\left(n-2\right)\left(n^2-1\right)}\\
& = \frac{2}{n(n-1)} + \frac{4\left(m^i-1\right)\left(m^j-1\right)}{n(n-2)(n^2-1)}
\end{align*}
\end{proof}
\noindent The variance then follows as:
\[
\frac{2n(n-1)-4m^im^j}{n^2(n-1)^2} + \frac{4\left(m^i-1\right)\left(m^j-1\right)}{n(n-2)(n^2-1)}
\]

\section{Experiments}
\begin{figure*}
    \centering
    \includegraphics[scale=.352]{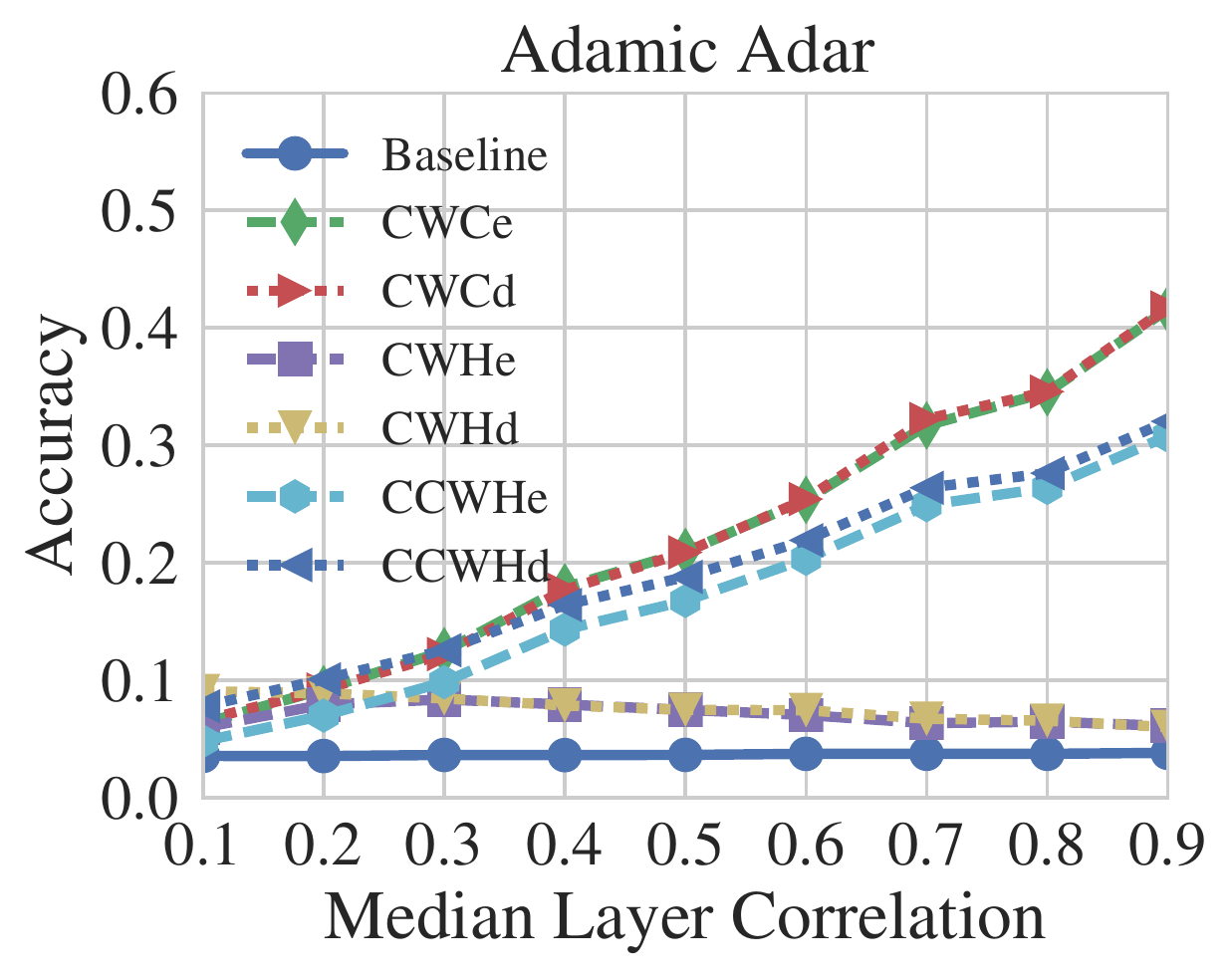}
    \includegraphics[scale=.352]{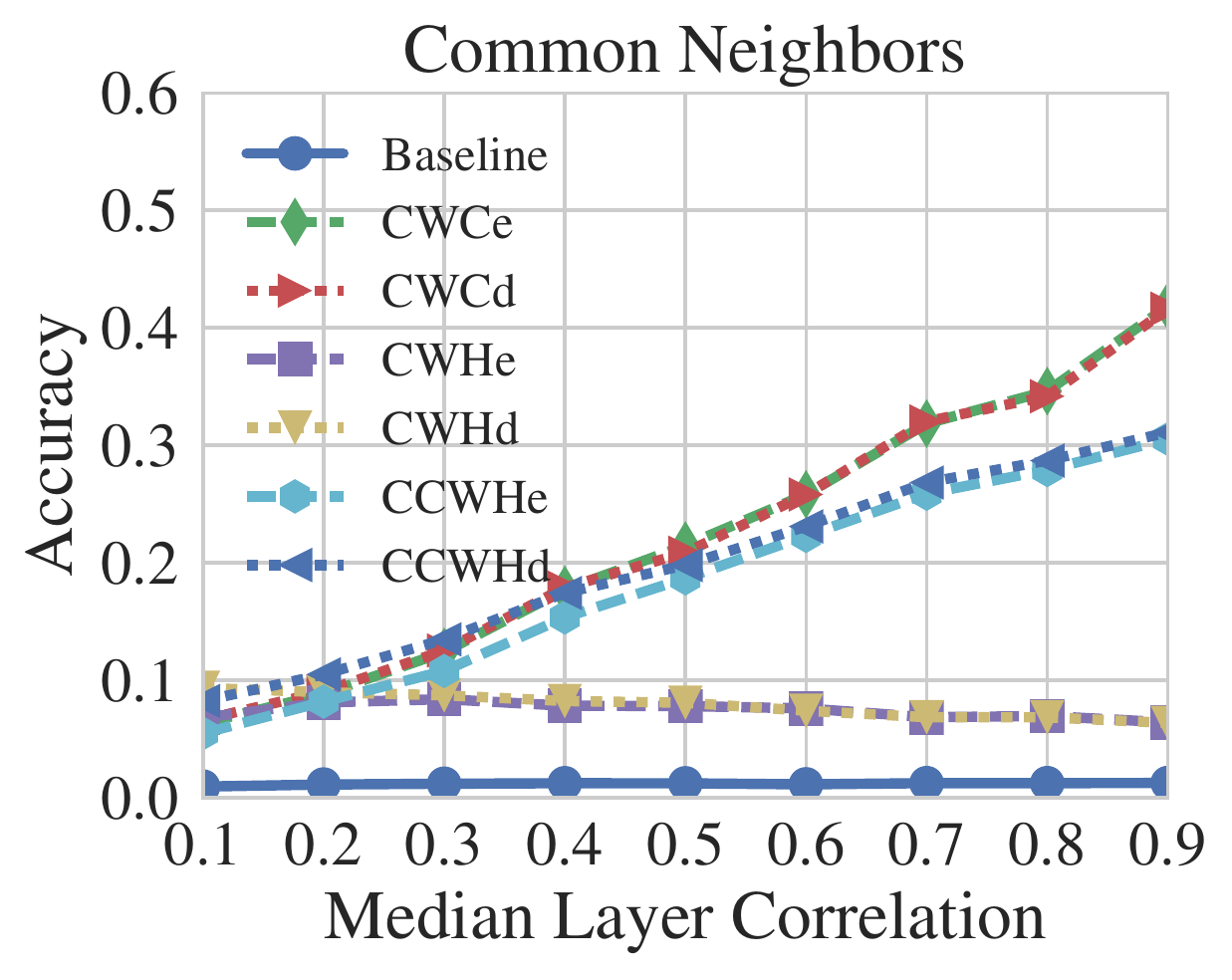}
    \includegraphics[scale=.352]{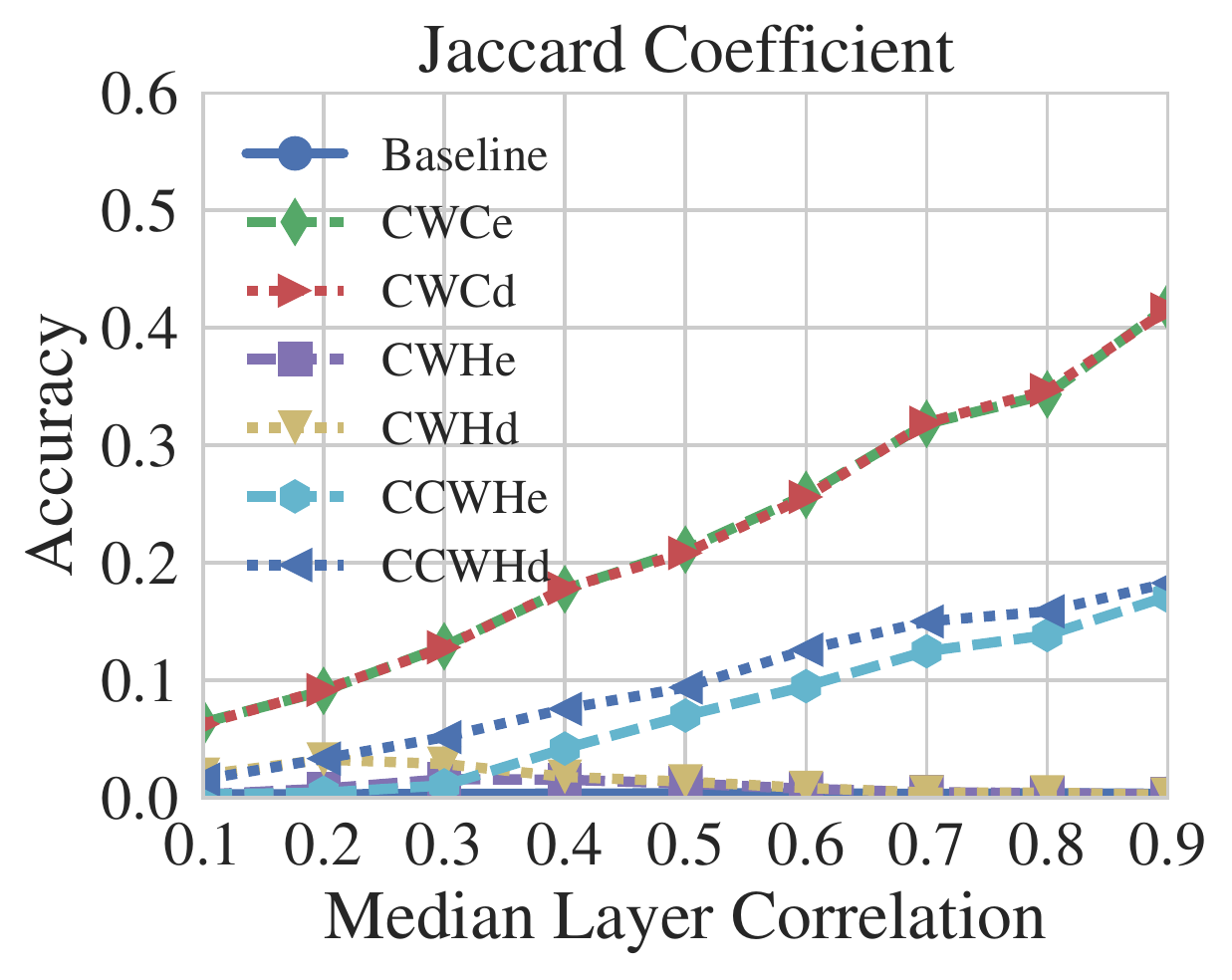}
    \includegraphics[scale=.352]{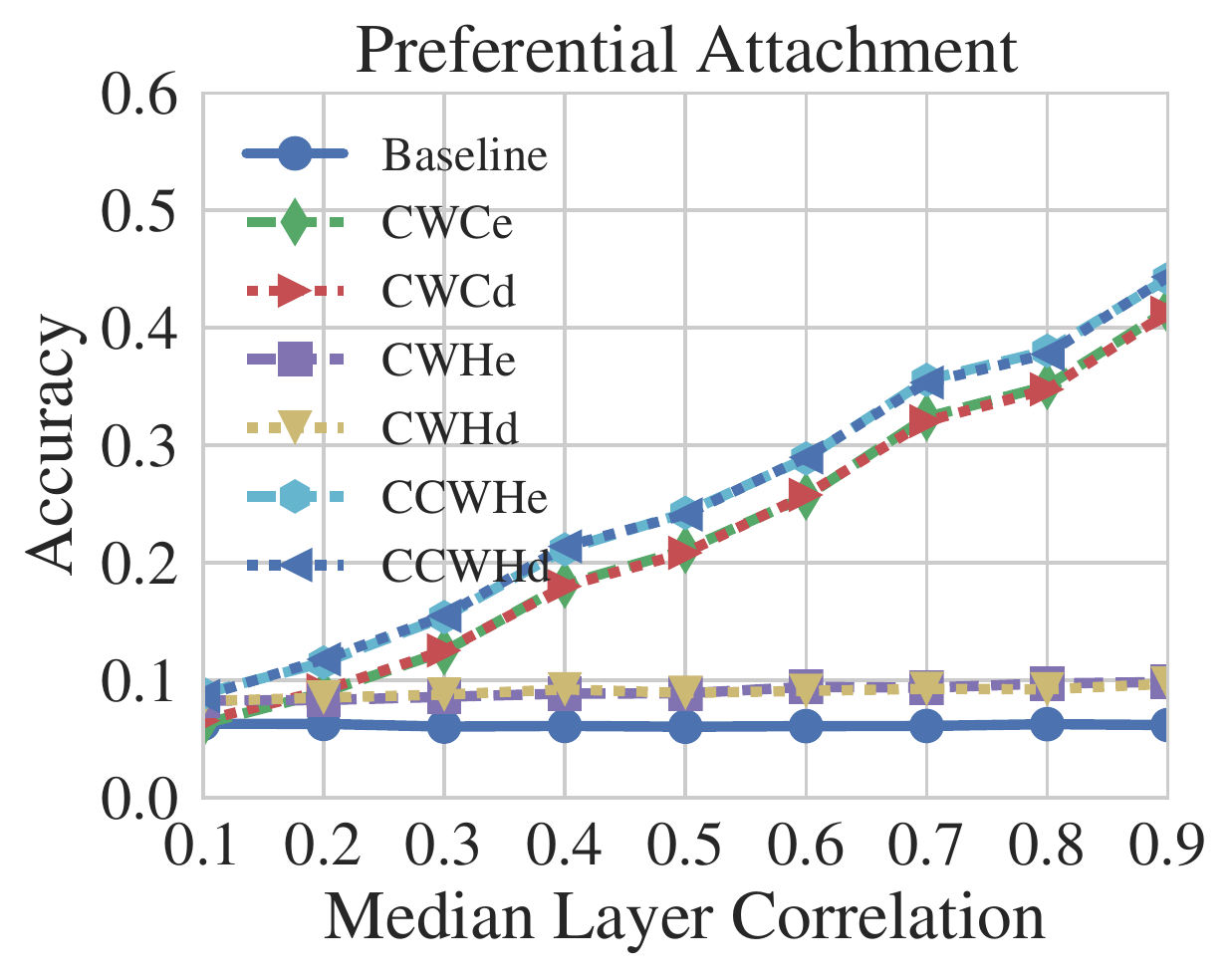}\\\ \\
    \includegraphics[scale=.352]{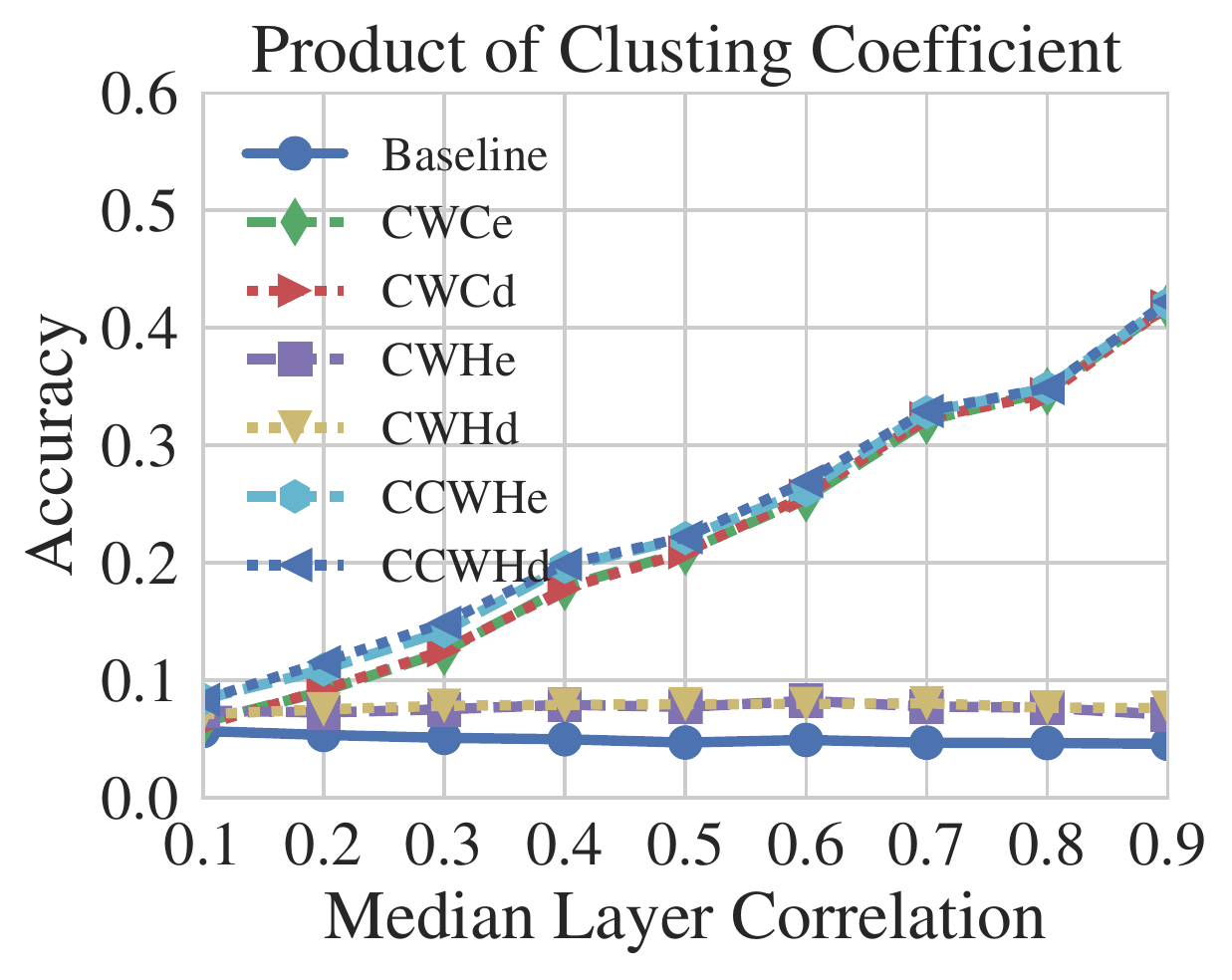}
    \includegraphics[scale=.352]{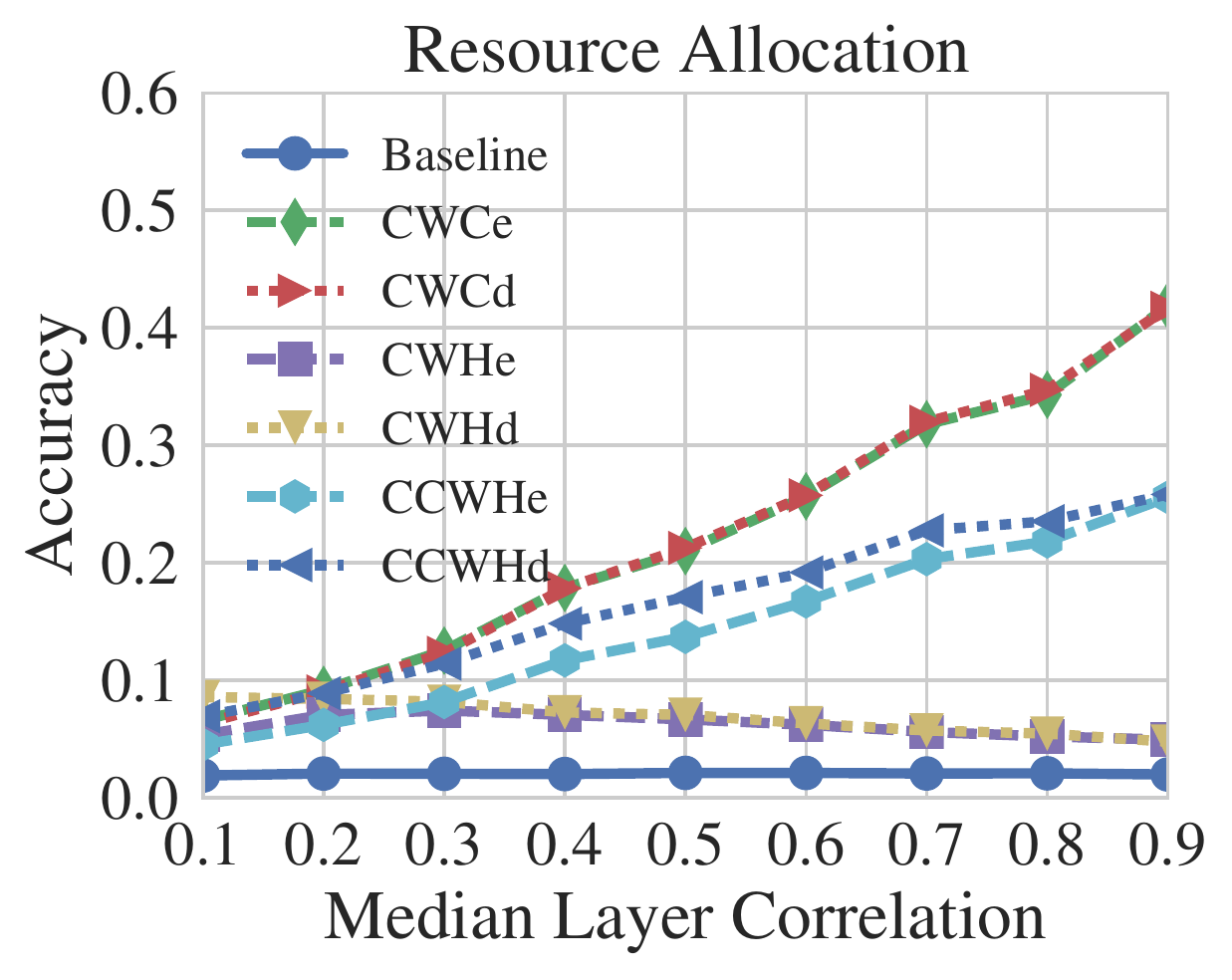}
    \includegraphics[scale=.352]{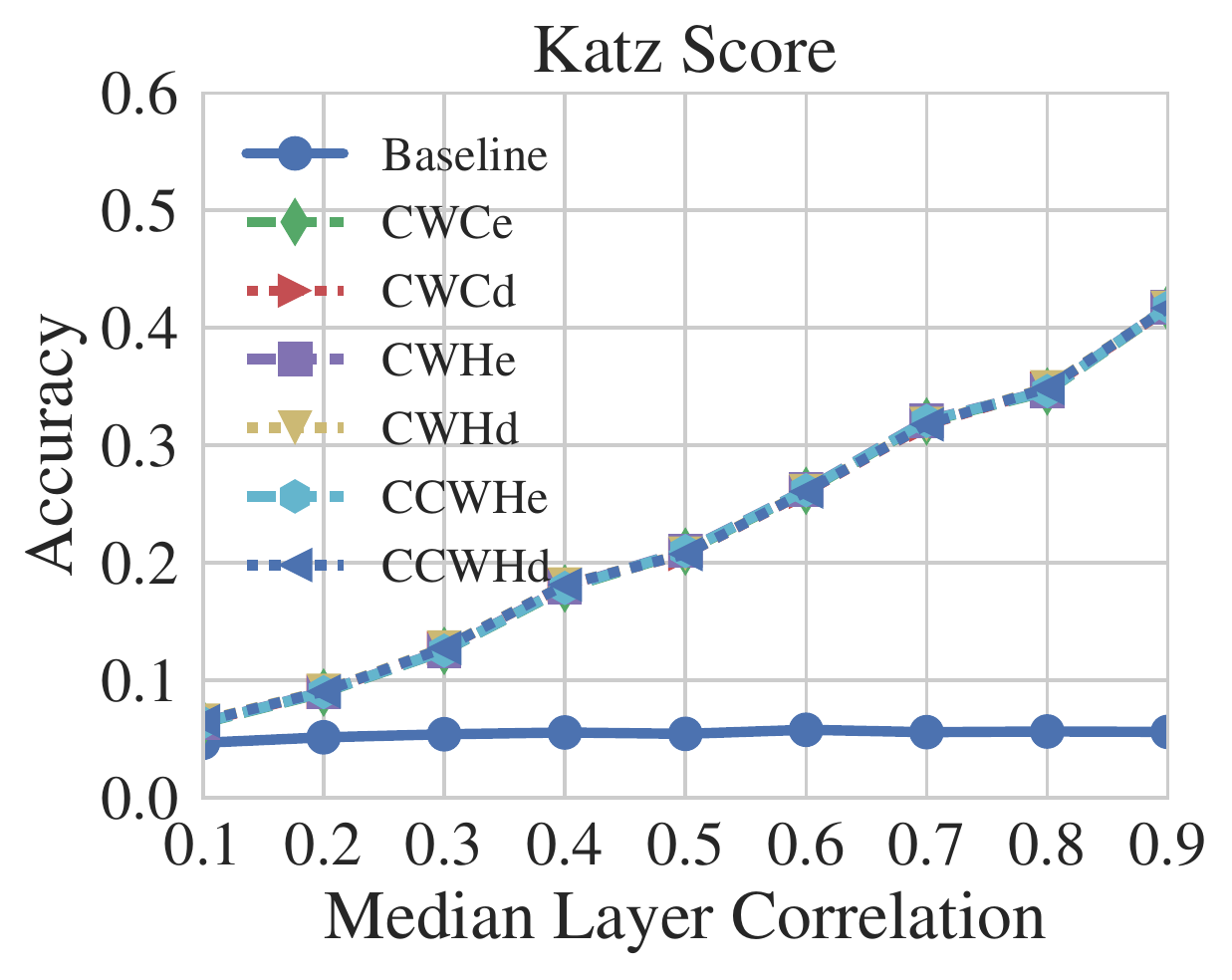}
    \includegraphics[scale=.352]{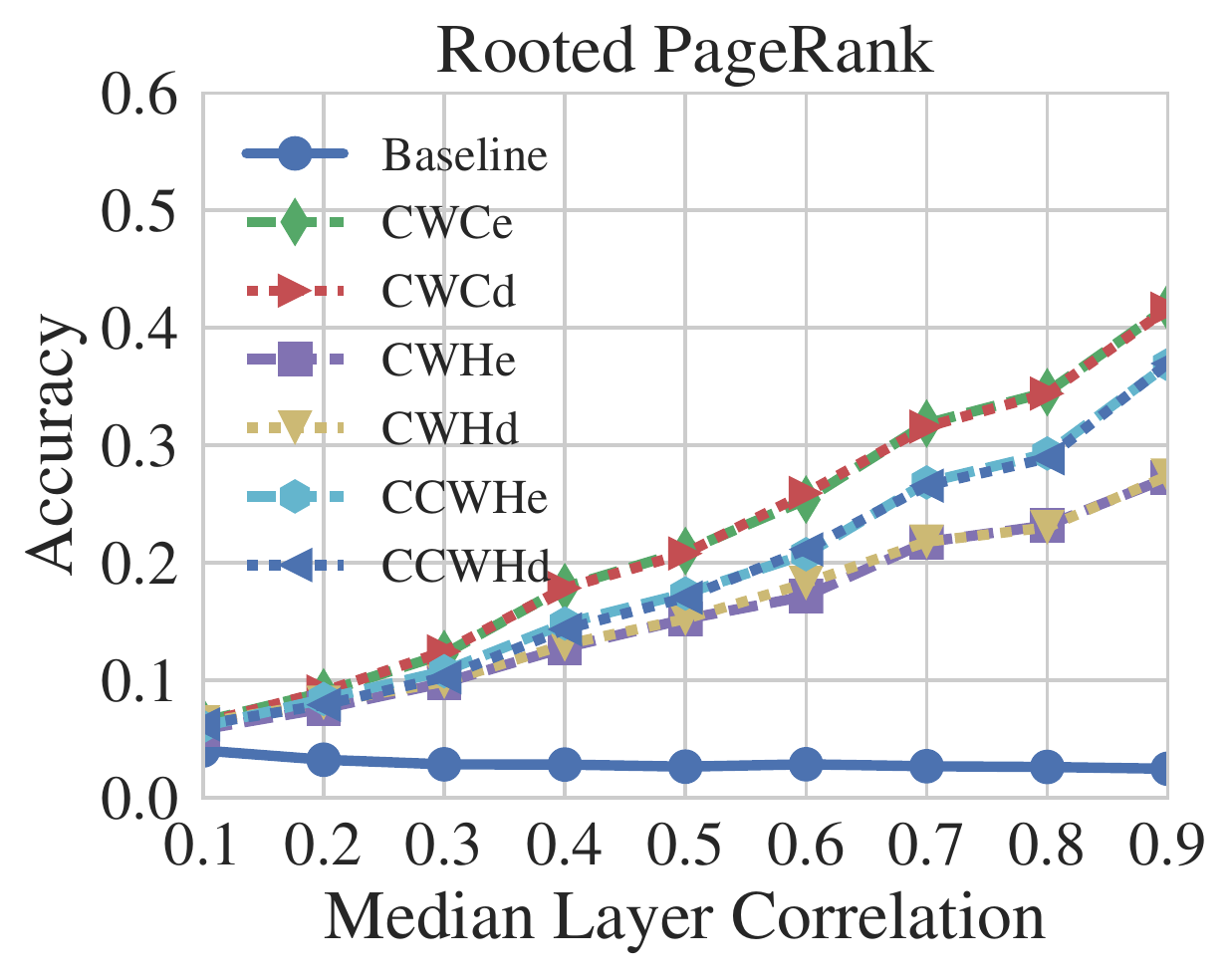}
    
    \caption{Accuracy of the proposed multiplex heuristics and monoplex baselines on synthetic networks with 100 nodes, 10 layers and median cross-layer correlations between 0.10 and 0.90. Larger values indicate higher accuracy.}
\end{figure*}

We first evaluated each of the multiplex network heuristics proposed in the previous section on synthetically generated multiplex networks with varying numbers of nodes, layers and magnitudes of cross-layer correlation. To generate random multiplex networks, we begin by generating random graphs for each layer using the Barab\'asi-Albert random graph generating model, which incorporates the preferential attachment and ``rich get richer'' properties that characterize many real world networks \cite{barabasi:99}. Then, for each node pair in each layer, we add or remove the corresponding edge according to whether it exists at a randomly chosen layer with a specified probability calibrated to match a desired value for median cross-layer correlation (in terms of shared edges). For each random network, we then downsample the edges at each layer by 25\% and evaluate each of our proposed multiplex heuristics for all node pairs at which no link exists. We predict links for the top $x$ scoring pairs corresponding to the number of edges removed. We do this for each layer and average over 100 random networks the percentage of correctly predicted links (predicted edges that were removed during downsampling), which we report as accuracy. We compare the three proposed heuristics to baselines where we evaluate the corresponding monoplex heuristics at the layer being predicted. We plot accuracy against median cross-layer correlation for synthetic networks with 100 nodes and 10 layers in Figure 2. We append 'e' and 'd' to the abbreviations of multiplex heuristics to indicate the usage of edge or degree property matrices when calculating cross-layer correlations.

We first note that both CWC and CCWH significantly outperform all of the baseline monoplex heuristics when cross-layer correlation structure is present, and this outperformance increases linearly with median cross-layer correlation. For the neighbor-based heuristics, CWC, the simplest heuristic, either performs comparable to or better than CCWH, while for the path-based heuristics, CWC and CCWH perform comparably. This is consistent with the finding in \cite{libennowell-04} that simpler heuristics often outperform more complex heuristics in the single-layer case. Furthermore, while CWC is the simplest of the three heuristics, it also most directly captures the richest source of information available when layers are correlated, i.e. whether the edge exists in a highly correlated layer. Thus, in this context, the outperformance of CWC is not surprising. While CWH also outperforms all of the baseline monplex heuristics (not always significantly) the outperformance does not increase with median cross-layer correlations when neighbor-based heuristics are used. This seems to indicate the heuristics applied at additional layers provides limited value when not combined with additional layer specific information, even when cross-layer correlations are significant. However, when path-based heuristics are used, the performance of CWH does increase with median cross-layer correlation indicating the path-based heuristics do pick up on increasingly useful information as cross-layer correlations increase. We observe similar performance when we vary the nodes between 10 and 100 and layers between 5 and 50. In general there is a slight performance increase with more layers, but the increase is minimal once median layer correlation reaches approximately 0.50.

\begin{figure*}
    \centering
    \includegraphics[scale=.256]{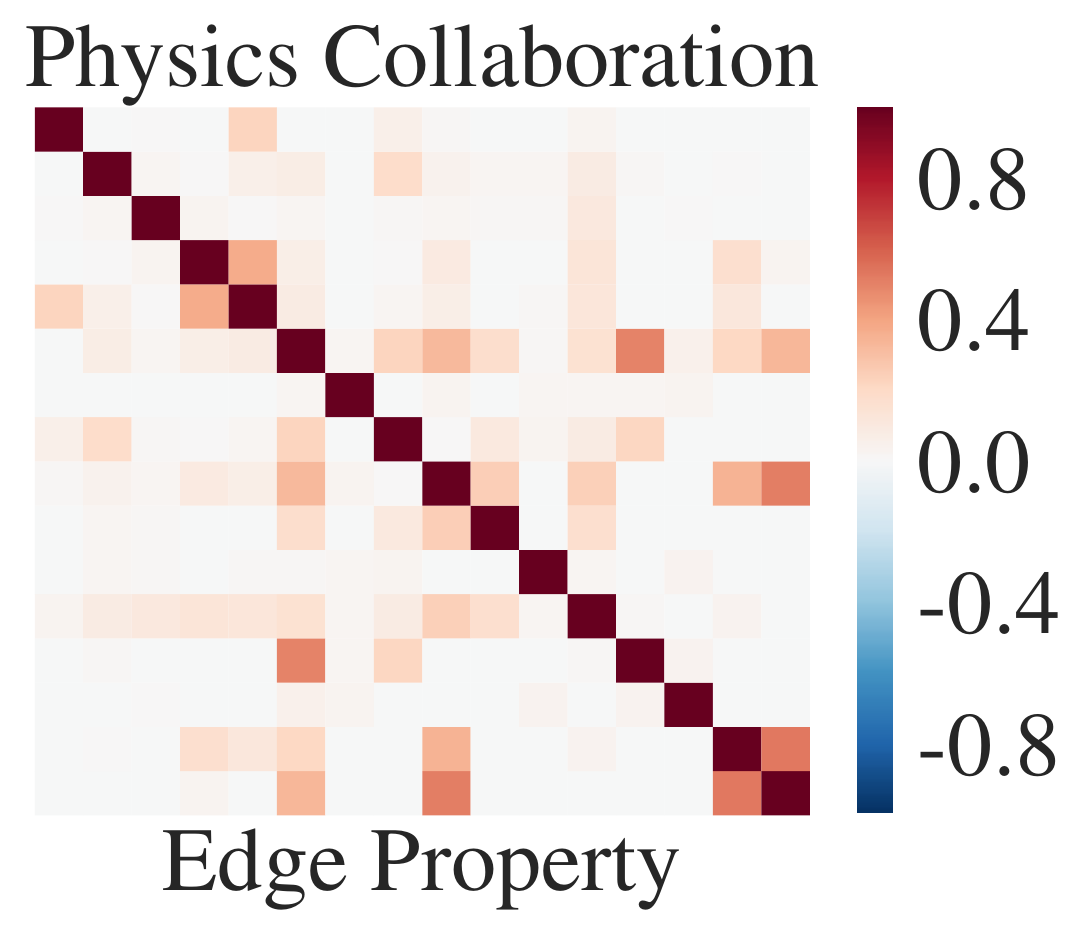}
    \includegraphics[scale=.256]{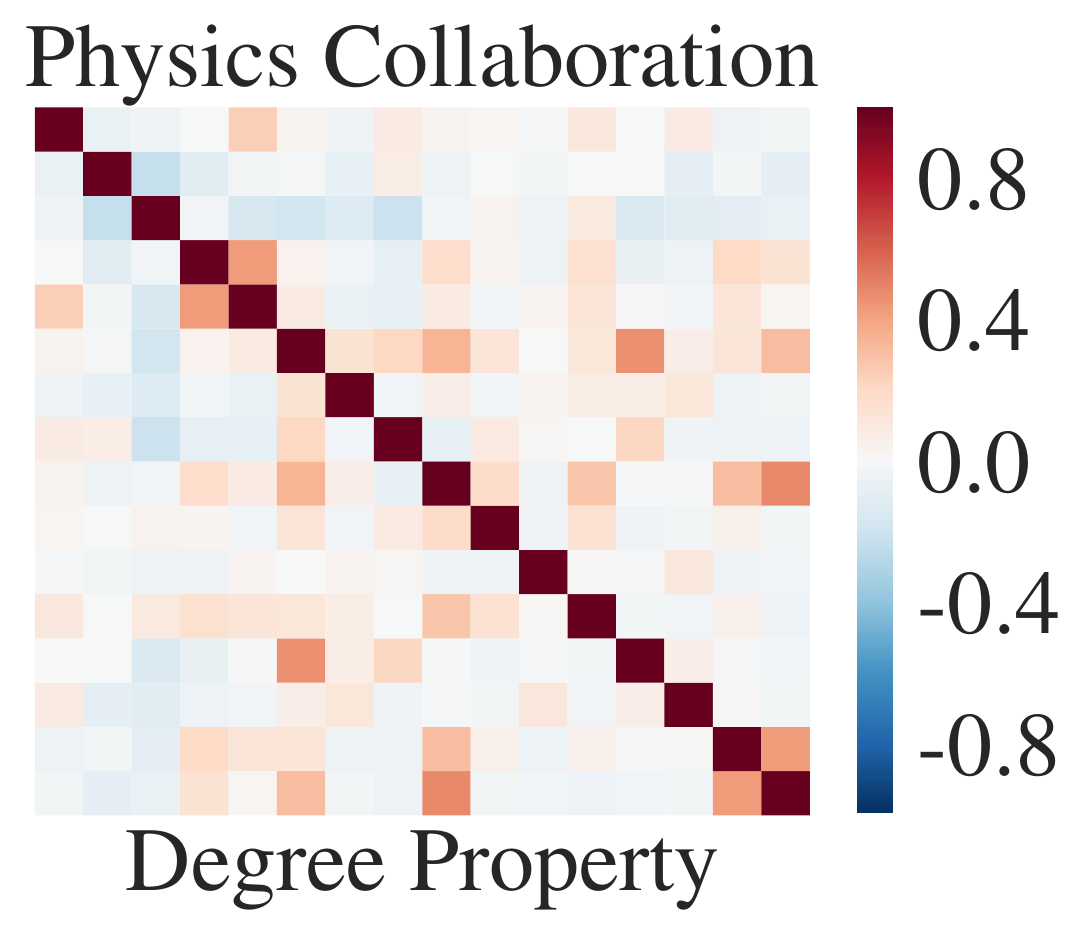}
    \includegraphics[scale=.256]{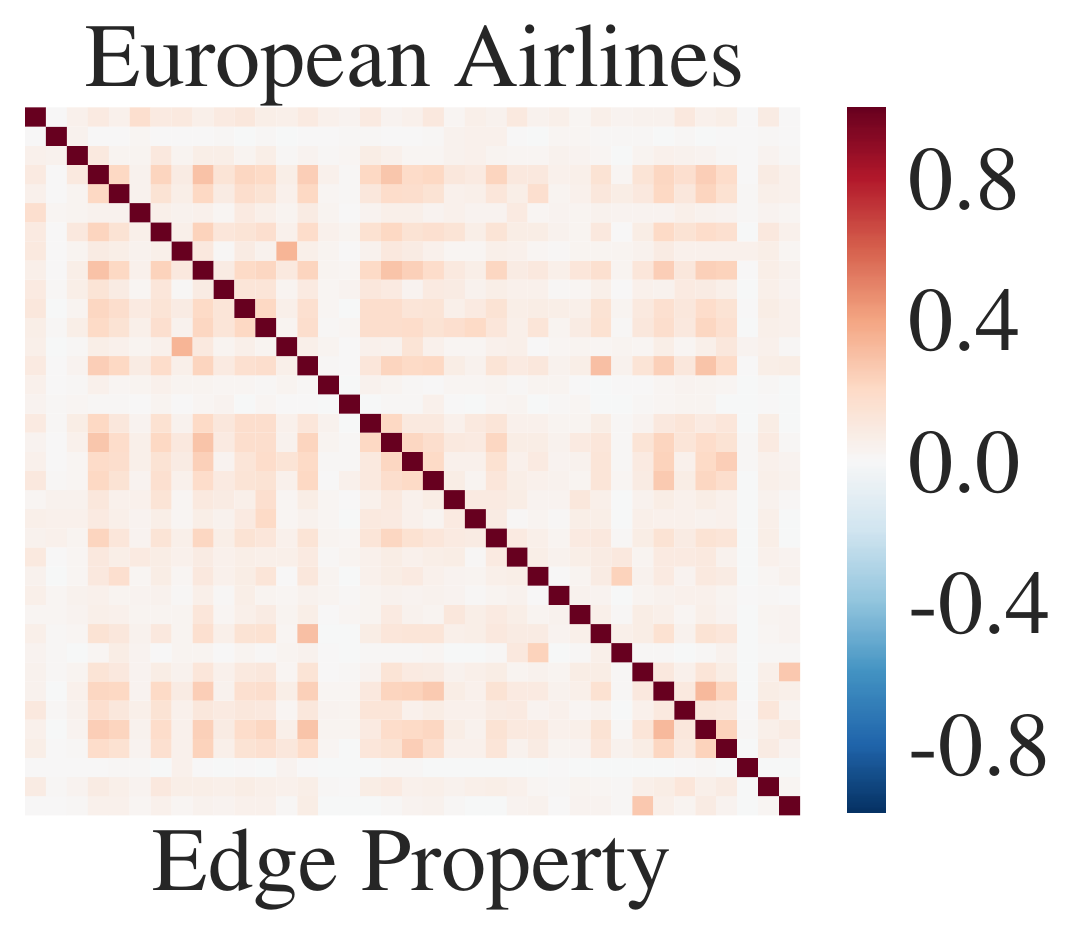}
    \includegraphics[scale=.256]{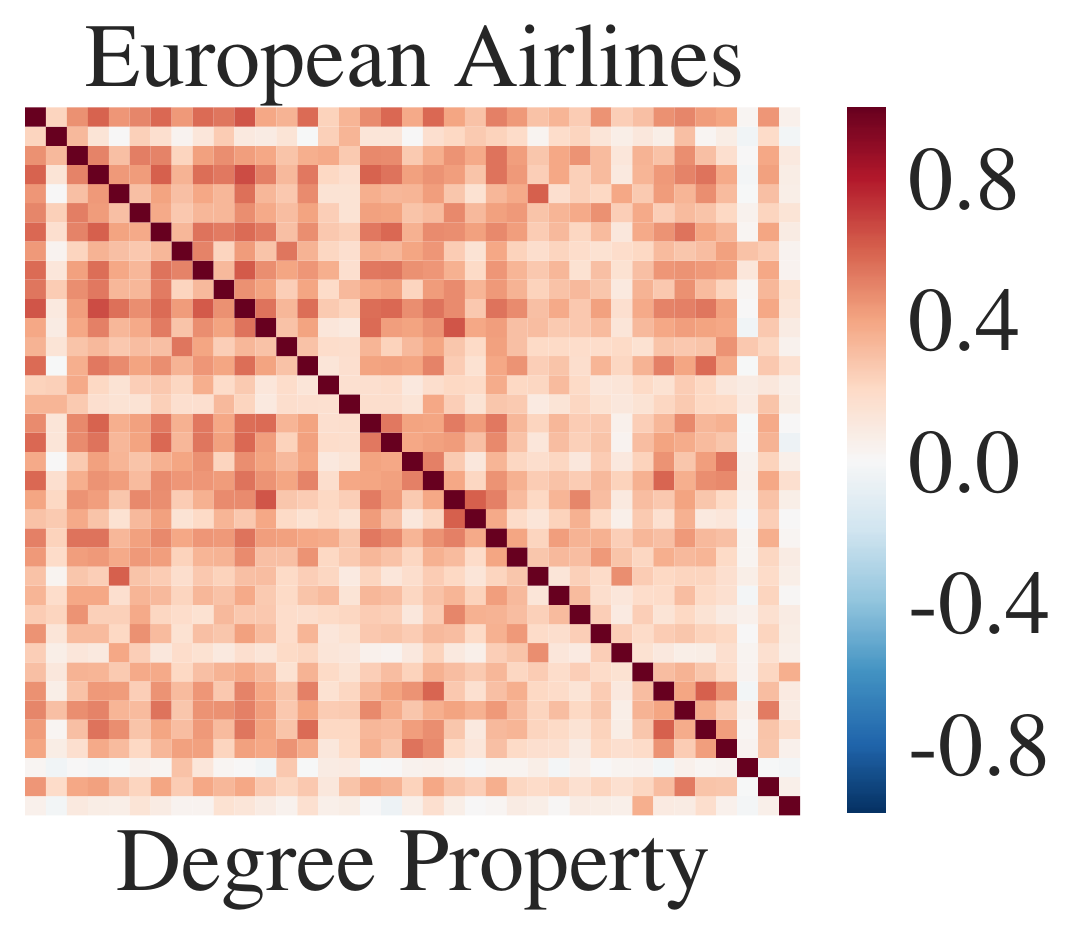}
    \includegraphics[scale=.256]{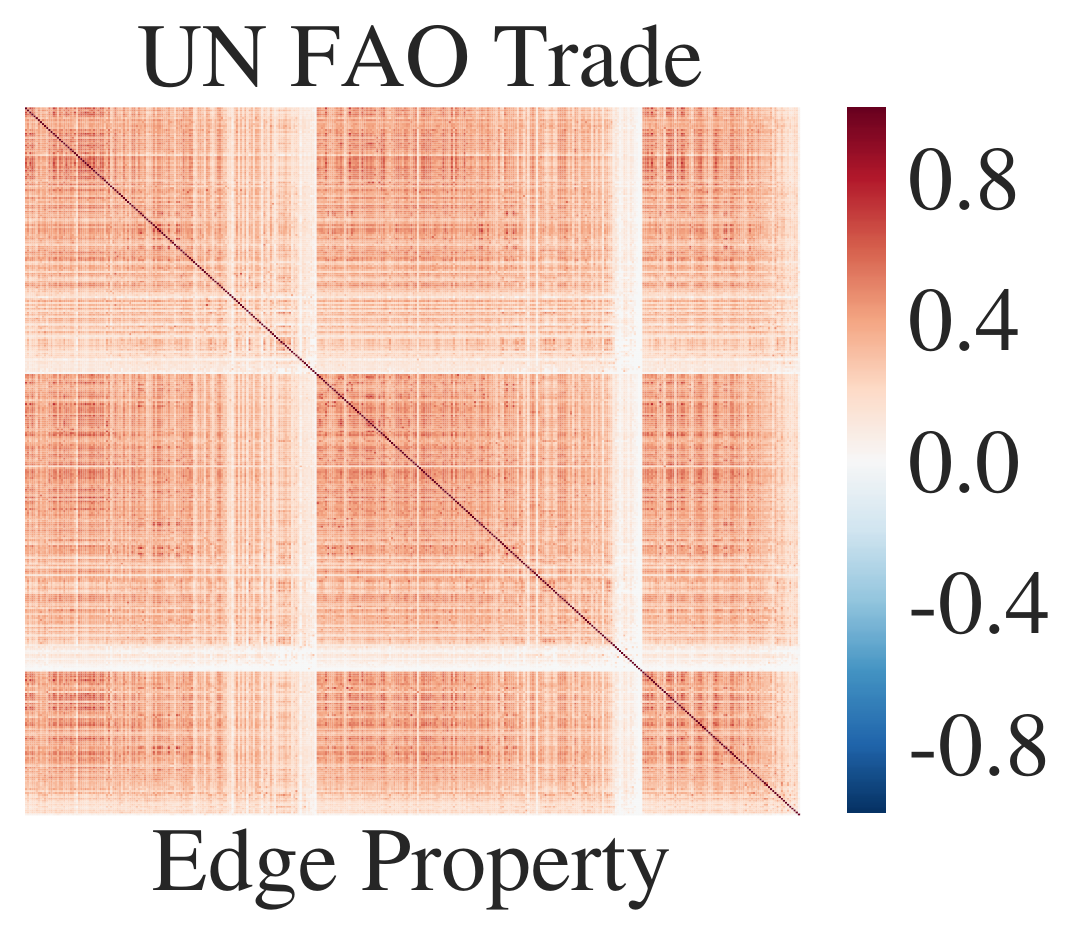}
    \includegraphics[scale=.256]{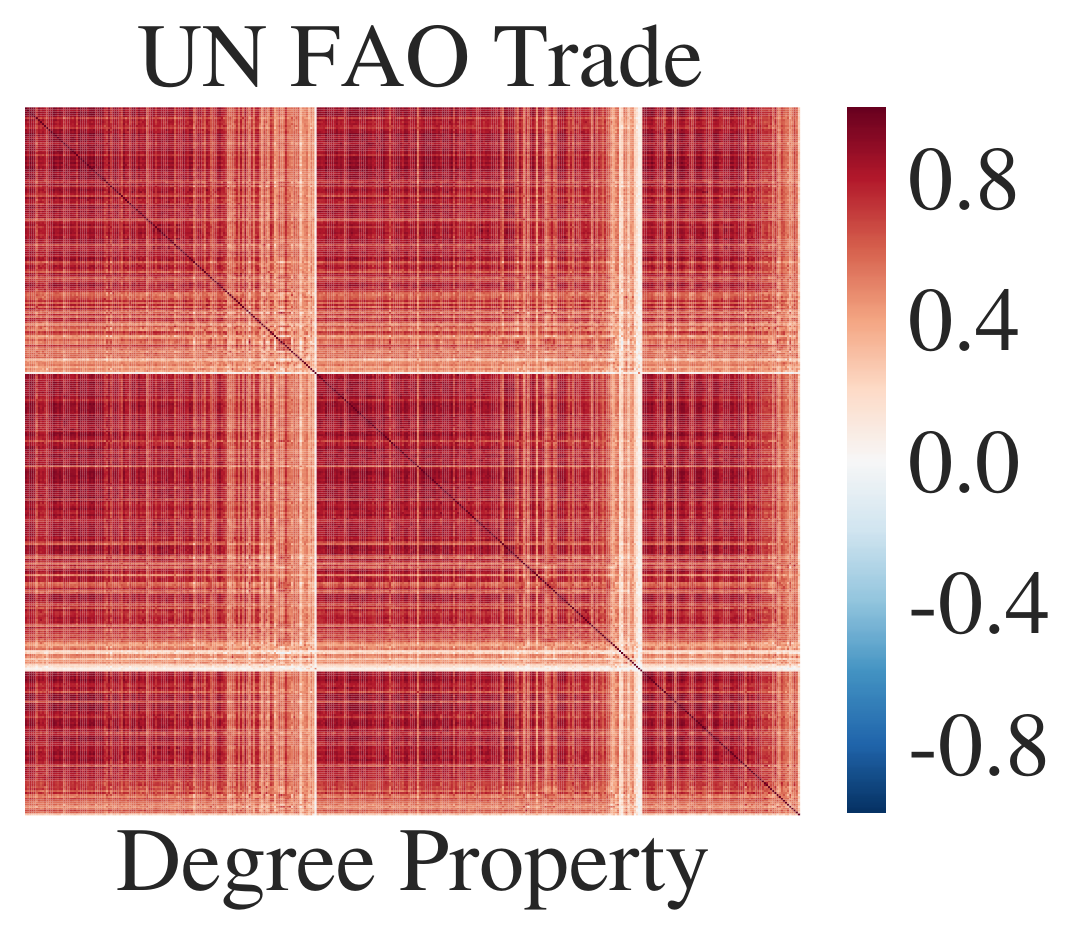}
    \caption{Cross-layer correlation matrices for Pierre Auger Physics Collaboration, European Airlines and UN FAO Global Trade multiplex networks. Darker red cells indicate stronger positive correlations whereas darker blue cells indicate stronger negative correlations.}
\end{figure*}
\begin{figure*}
    \centering
    \includegraphics[scale=.4482]{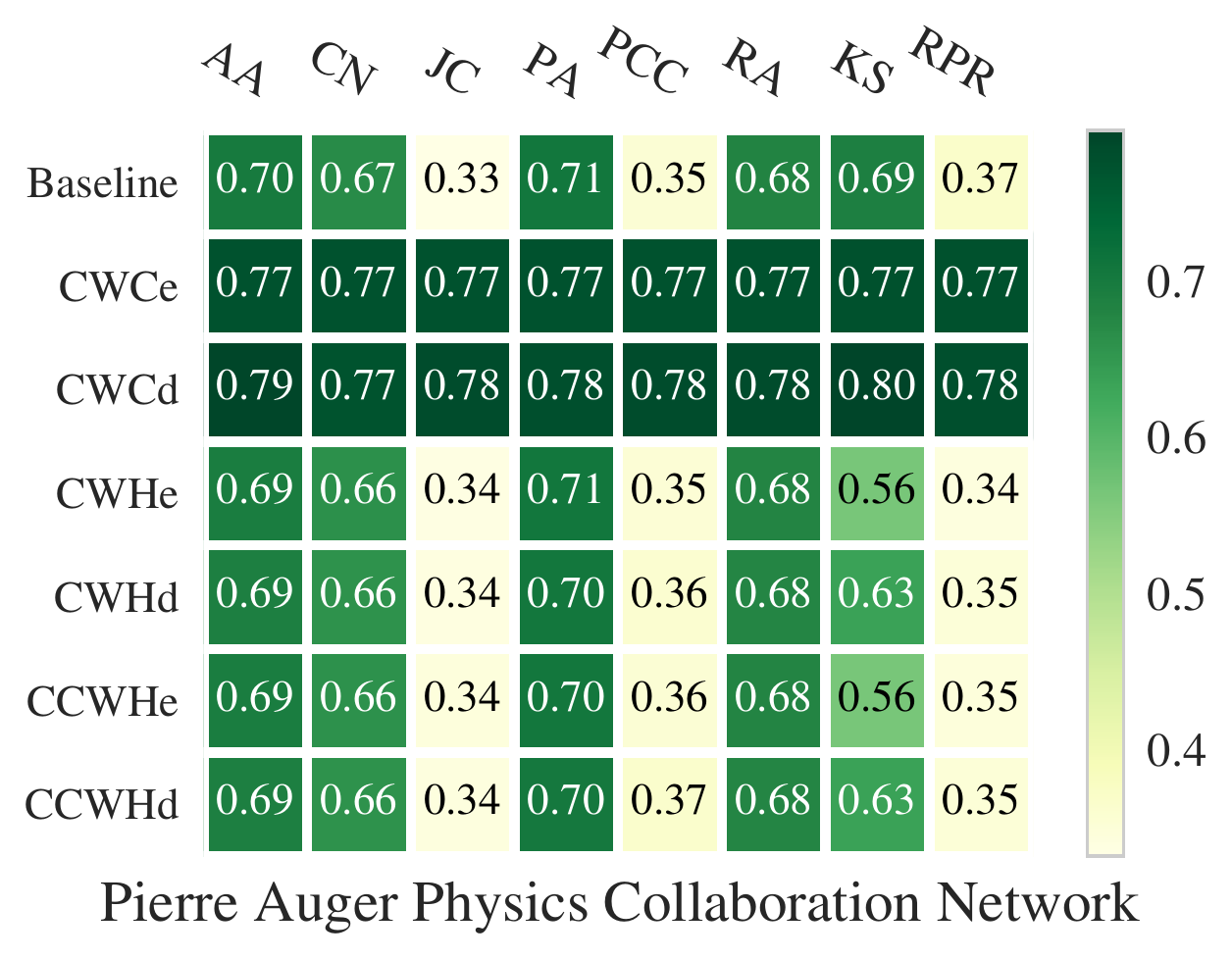}
    \includegraphics[scale=.4482]{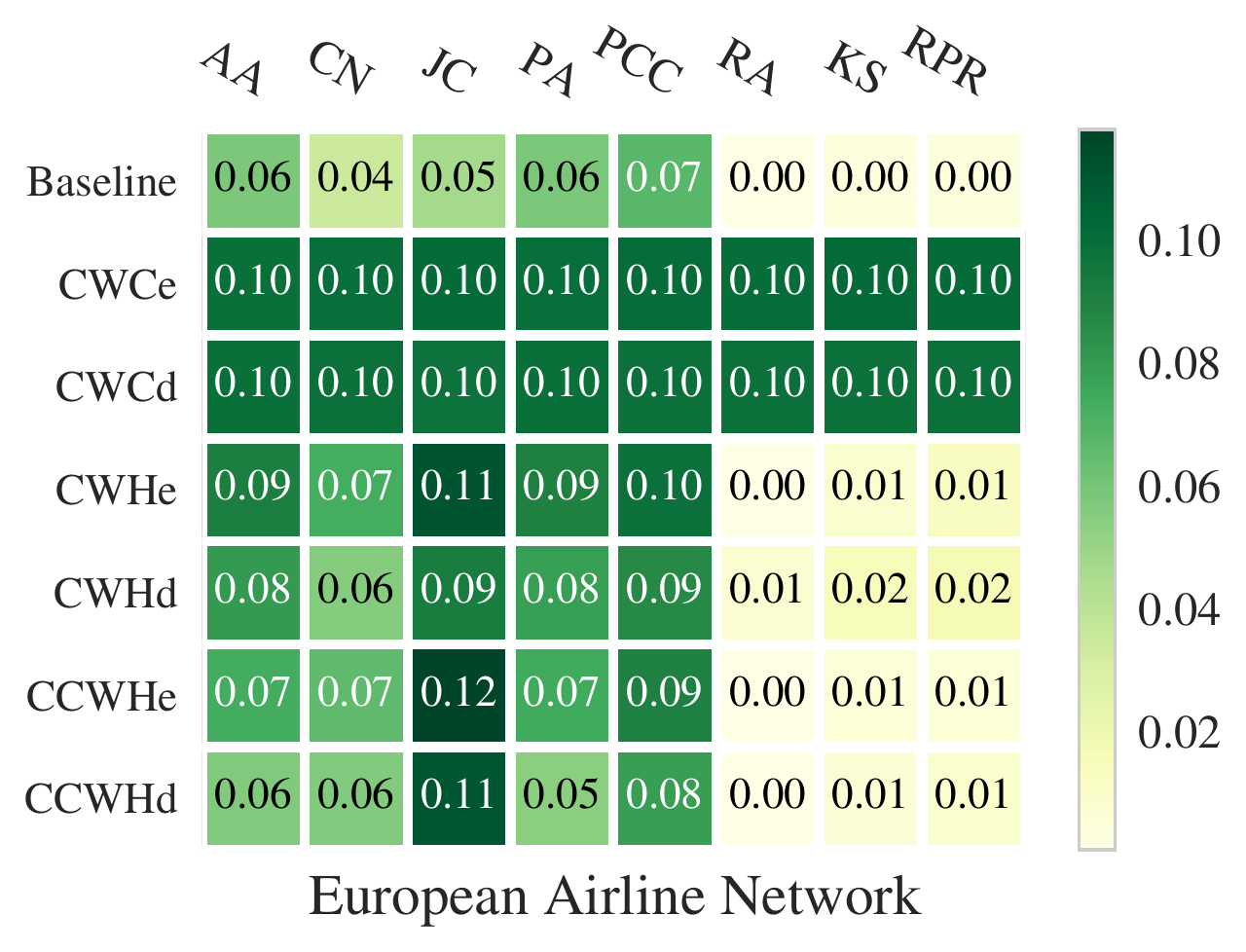}
    \includegraphics[scale=.4482]{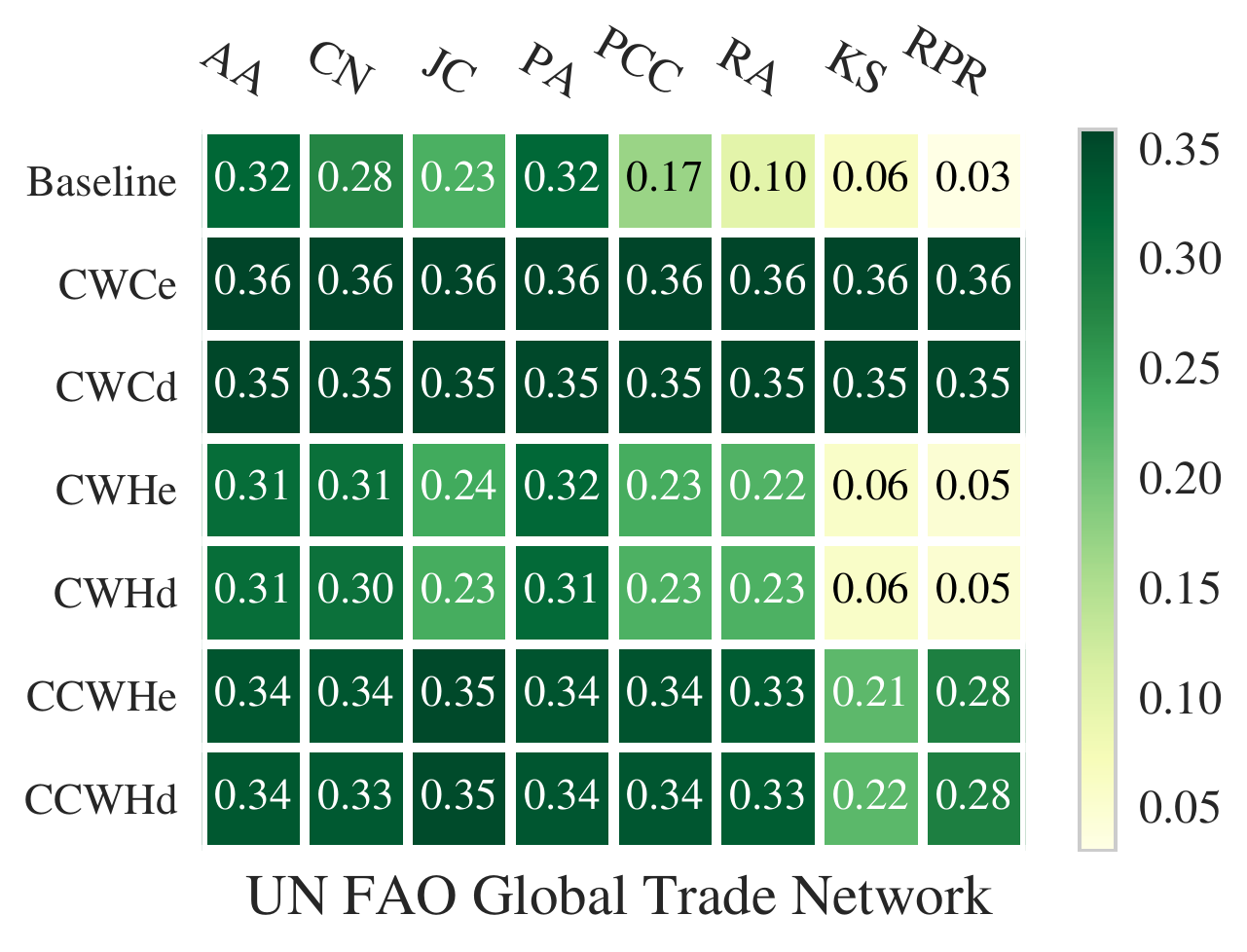}
    \caption{Accuracy of the proposed multiplex heuristics and monoplex baselines on real world scientific collaboration, transportation and global trade multiplex networks. Larger values / darker cells within a specific column indicate higher accuracy.}
\end{figure*}We also evaluated the proposed heuristics on three real world multiplex networks using the same procedure where we downsample edges: a scientific collaboration network with 16 layers representing collaboration on different tasks among 514 scientists at the Pierre Auger Observatory, the largest observatory of ultra-high-energy cosmic rays \cite{domenico:15}, an airline transportation network with 37 layers representing different European airline carriers' direct routes between 450 airports \cite{cardillo:13} and an economic global trade network from the United Nations Food and Agriculture Organization with 364 layers representing import/export relations for a particular food item among 214 countries \cite{domenico:15b}. We show the cross-layer correlation matrices for the edge and degree property matrices in Figure 3, which indicate strong correlation structure, particularly in the case of the UN FAO trade network. Given this strong correlation structure, we should expect the multiplex network heuristics to outperform their monoplex heuristic baselines. For each network, we provide accuracy as a heat map comparing the corresponding monoplex heuristic baselines to each of the proposed multiplex heuristics as columns in Figure 4.

We first note that CWC significantly outperforms all of the monoplex heuristic baselines on all of the real-world networks, consistent with the performance seen in the simulations. CWH and CCWH also either perform better than or the same as each baseline in the airline and trade networks, while their performance is in general similar to their baselines in the collaboration network. We note, however, that in the collaboration network, the baseline performance is already quite high and this network exhibits the weakest correlation structure of these real world networks. Performance is most consistent with the simulations in the trade network, where we see strong outperformance for all of the multiplex heuristics, with the outperformance most significant for CWC and CCWH. We note that this network contains both the most layers and the richest cross-layer correlation structure, which supports our motivation and objective to develop heuristics that take advantage of this structure when present.

Finally, while our focus was to evaluate the proposed heuristics when used for unsupervised link prediction, we also investigated using them as additional features with supervised approaches. Previous supervised approaches for link prediction in multiplex networks have trained separate classifiers for each layer in the network using monoplex heuristics evaluated at each layer as features (as opposed to only using heuristics evaluated at the layer at which links are being predicted). To investigate whether adding our proposed multiplex heuristics as additional features to this set improves supervised performance, we trained Logistic Regression, Naive Bayes and Random Forest classifiers using three different feature sets: \emph{Monoplex-only}, which includes all of the monoplex heuristics discussed in section 2 evaluated at all of the layers in the network, \emph{Multiplex-only}, which includes only the proposed multiplex heuristics CWC, CWH and CCWH using edge and degree cross-layer correlations for each of the monoplex heuristics discussed in section 2 as inputs, and \emph{All features}, which includes both of these sets. To generate these feature sets, we evaluate the heuristics for all node pairs at each layer in the network and make the corresponding label 1 or 0 depending on whether an edge exists. This results in significantly fewer 1 labels so we balance the datasets by subsampling the 0 labeled examples. We then split the datasets into 20\% test data and 80\% training data. We did this for the European Airlines Network and Pierre Auger Physics Collaboration Network, excluding the UN FAO Global Trade Network (the network with the most layers) for computational reasons. We report area under the ROC curve (AUROC) on the test data averaged across the classifiers trained on each layer for each feature set and classifier in tables 1 and 2.
\begin{table}
\caption{AUROC for European Airlines Network} 
\centering 
\begin{tabular}{c | c c c} 
& Monoplex-only & Multiplex-only & All features \\ [0.1ex] 
\hline\vspace{-.25cm}\\ 
Logistic Reg. & 0.893 & 0.957 & 0.900\\
Naive Bayes & 0.963 & 0.954 & 0.977\\
Random Forest & 0.985 & 0.967 & 0.994\\ [.1ex] 
\end{tabular}
\label{table:nonlin} 
\end{table}
\begin{table}
\caption{AUROC for Pierre Auger Physics Collaboration Network} 
\centering 
\begin{tabular}{c | c c c} 
& Monoplex-only & Multiplex-only & All features \\ [0.1ex] 
\hline\vspace{-.25cm}\\ 
Logistic Reg. & 0.995 & 0.999  & 0.996\\
Naive Bayes & 0.995 & 0.998 & 0.999\\
Random Forest & 0.991 & 0.995 &  0.996
\end{tabular}
\label{table:nonlin} 
\end{table}

We first we note that adding the multiplex heuristics to the monoplex features improves performance in terms of both AUROC in all cases (for all classifiers and networks). Additionally, using only the multiplex heuristics leads to greater performance than using all of the monoplex features from all layers when Logistic Regression is used for the European Airlines network and for all of the classifiers trained on the Pierre Auger Physics Collaboration Network. This is despite the fact that this is a much smaller feature set than using the monoplex heuristics evaluated across all layers. While our focus was to provide simple, interpretable heuristics for unsupervised prediction, akin to the similiarity heuristics for unsupervised prediction in monoplex networks, rather than to develop supervised methods, these result provide evidence that our heuristics both (i) add value as additional unique features and (ii) are efficient in that they result in similar or better performance than higher-dimensional feature sets resulting from applying monoplex heuristics across all network layers.

We also note that these AUROC scores are indicative of greater prediction accuracy than those reported in the unsupervised experiments (for both multiplex and monoplex features). This is not simply a consequence of using a supervised method compared to an unsupervised method, but also reflective of the fact that picking a top $k$ ranking of most likely links after a sufficient amount of the network has been corrupted by removing edges is a significantly more difficult problem than predicting whether an edge is present from heuristics which are calculated using a fully-uncorrupted network and provided for all existing edges in a training set. The former problem is more reflective of real-world applications.

\section{Conclusion and Future Work}

We proposed a general framework and three families of multiplex network heuristics for link prediction, CWC, CWH and CCWH. While these heuristics improve supervised methods, they provide a simple, interpretable representation that can be used for efficient unsupervised prediction. Our framework is adaptive to a given problem setting and efficiently takes advantage of rich cross-layer correlation structure when present. Experiments using synthetic and real world networks confirm these heuristics significantly outperform their baselines and performance increases with the strength of correlations.

One line of future research is a more structure specific thresholding procedure: while we find cases of multiplex networks with many correlated and uncorrelated layers where the threshold we provide improves performance, in many cases performance is not affected by using the threshold. If we instead used thresholds based on random graph models that are more specific to the observed structure of a given layer, e.g.  Barab\'asi-Albert random graph models if we observe power-law node degree distributions, this might result in a more robust procedure. Deriving thresholds based on Barab\'asi-Albert and other more complex random graph models is, however, much less straightforward. Another line of open research is developing a more robust procedure for simulating random multiplex networks with specified correlation structures. The procedure we use begins with realistic Barab\'asi-Albert random graphs as layers, but after edges are added and removed to create correlated layers, the degree distribution guarantees of Barab\'asi-Albert graphs are no longer valid. A more robust procedure for generating random multiplex networks would guarantee both a specified layer correlation structure in addition to local properties at each of the layers.

\section*{Disclaimer}

This paper was prepared for information purposes by the AI Research Group of JPMorgan Chase \& Co and its affiliates (“J.P. Morgan”), and is not a product of the Research Department of J.P. Morgan.  J.P. Morgan makes no explicit or implied representation and warranty and accepts no liability, for the completeness, accuracy or reliability of information, or the legal, compliance, financial, tax or accounting effects of matters contained herein.  This document is not intended as investment research or investment advice, or a recommendation, offer or solicitation for the purchase or sale of any security, financial instrument, financial product or service, or to be used in any way for evaluating the merits of participating in any transaction.

\bibliographystyle{ecai} 
\bibliography{aaai.bib}
\end{document}